\newcommand{\R}{\ensuremath{\mathbb{R}}}
\newcommand{\coef}{\boldsymbol{w}}
\newcommand{\ignore}[1]{}
\newcommand{\fairreg}{(FR1)}
\newcommand{\fairabs}{(FR2)}
\newcommand{\basicmodel}{(NAT)}
\newtheorem{proposition}{Proposition}
\newtheorem{corollary}{Corollary}
\theoremstyle{definition}
\newtheorem{remark}{Remark}
\begin{document}
\title{Fair and Accurate Regression: \\ Strong Formulations and Algorithms}
\author{Anna Deza\thanks{Industrial Engineering and Operations Research, University of California, \texttt{annadeza@berkeley.edu}.}, Andr\'es G\'omez\thanks{Industrial and System Engineering, University of Southern California, \texttt{gomezand@usc.edu}.}, and Alper Atamt\"urk \thanks{Industrial Engineering and Operations Research, University of California, \texttt{atamturk@berkeley.edu}.}}
\date{December 2024}
\maketitle
\footnotetext[1]{This research was supported, in part, by NSF AI Institute for Advances in Optimization Award 2112533 and AFOSR Award FA9550-24-1-0086.}
\begin{abstract}
This paper introduces mixed-integer optimization methods to solve regression problems that incorporate fairness metrics. We propose an exact formulation for training fair regression models. To tackle this computationally hard problem, we study the polynomially-solvable single-factor and single-observation subproblems as building blocks and derive their closed convex hull descriptions. 
Strong formulations obtained for the general fair regression problem in this manner are utilized to solve the problem with a branch-and-bound algorithm exactly or as a relaxation to produce fair and accurate models rapidly. Moreover, to handle large-scale instances, we develop a coordinate descent algorithm motivated by the convex-hull representation of the single-factor fair regression problem to improve a given solution efficiently. Numerical experiments conducted on fair least squares and fair logistic regression problems show competitive statistical performance with state-of-the-art methods while significantly reducing training times. 
\end{abstract}
\section{Introduction}
Machine learning models are used to make decisions that can have significant implications on individuals and communities, such as bank loan approvals, college admissions, allocation of healthcare resources, and criminal sentencing \citep{le2022survey}. Building such models without considering sources of bias can lead to undesirable outcomes, including discriminative behavior and amplifying the inaccuracies inherent to biased data \citep{barocas2016big, chouldechova2017fair}. 

The detection of algorithmic biases in machine learning models against specific groups in real-world scenarios, such as gender, race, age, and socioeconomic has spurred a recent effort to develop fair prediction models \citep{hort2024bias}.
Existing approaches to promote fair predictions predominantly address binary classification, with a lesser focus on regression \citep{caton2024fairness}.
Fair machine learning is divided into three broad categories, defined by the stage in which fairness considerations occur: pre-processing, in-processing and post-processing methods.
Among these, only in-processing methods allow for explicit control of fairness requirements during model training. In-processing techniques, which are the focus of this work, typically outperform other methods and have the added advantage of not requiring knowledge of an individual's group identity at deployment, unlike post-processing methods, which is often a legal requirement in sensitive settings \citep{pessach2022review}.

Incorporating fairness considerations directly into a model's training often introduces non-convexities and non-smoothness \citep{cotter2019optimization}, leading to potentially intractable optimization problems. One common approach to mitigate these challenges is to design proxy fairness measures that are easier to optimize \citep{kamishima2012fairness, calders2013controlling, zafar2017fairnessB, berk2017convex, komiyama18a, do2022fair}. Another line of work consists of solving a series of convex problems, as proposed by \citet{olfat2018spectral} and \citet{agarwal2018reductions} for classification, and  \citet{agarwal2019fair} for regression. Although such approaches reduce the computational burden of model training, the approximations used for fairness compromise the trade-off between accuracy and fairness \citep{olfat2018spectral}. In this work, we address these limitations by formulating the fair regression training problem with exact fairness metrics to achieve a more rigorous accuracy vs. fairness trade-off. Moreover, for improved computations, we propose a novel strengthening of the model through an extended formulation and develop a coordinate descent algorithm to produce high-quality solutions quickly.


\subsection{Problem Statement}\label{sec:problem.setting}

We begin by formally stating the fair regression problems considered in the paper. Let the population be partitioned into two classes: a protected class and a non-protected class. We assume a training set made up of $m$ observations, with each data point having the form $(\bm{x}_i, y_i, a_i)$, where $\bm{x}_i \in \R^n$ is the feature vector, $y_i \in \mathcal{Y} \subseteq \R$ is the response, and $a_i \in \{0,1\}$ indicating if observation $i$ belongs to a protected individual or not. Let $\hat{y} \in \hat{\mathcal{Y}}$ denote the prediction of a model, $m_1$ denote the number of protected individuals in the training set, and define $m_0 = m-m_1$.

Fairness is often measured using \emph{distance to demographic parity} \citep{pessach2022review}:
\begin{equation} 
\mathrm{DP} = \max_{b \in \hat{\mathcal{Y}}}| \mathbb{P}\left(\hat{y} > b | a = 1\right) - \mathbb{P}\left(\hat{y} > b\right) |,\label{def.dp}
\end{equation}
where probabilities are taken over the distribution of the data.
Demographic parity measures the difference in prediction rates across the protected class and the entire distribution. Indeed, if $\mathrm{DP}= 0$, then $\hat{y} \perp a$, that is, predictions are statistically independent of protected the class.  Thus, the metric $\mathrm{DP}$ measures the deviation from demographic parity. A small DP translates to a lesser difference between the positive prediction rates among groups, minimizing disparate impacts.

This paper considers linear regression models, which are parameterized by regression coefficients $\coef \in \R^n$.  Given a convex loss function $\mathcal{L}:\R \times \mathcal{Y} \rightarrow \R$, a fair and accurate model corresponds to the optimal solution of the optimization problem
\begin{align}\label{fair.training.constr0}
        \min_{\coef \in \R^n} & \mathbbm{E}\left[\mathcal{L}\left(\coef^\top\bm{x}, y\right)\right]\text{ 
 s.t. }{\mathrm{DP}}(\coef) \leq \epsilon,
\end{align}
where expectation and probabilities are taken over the distribution of the data, and parameter $\epsilon \in [0,1]$ controls the maximal violation to demographic parity. 
Since the distribution is rarely known explicitly, \eqref{fair.training.constr0} is often approximated using the empirical distribution induced by the dataset $\{(\bm{x}_i, y_i, a_i)\}_{i=1}^m$, namely
\begin{subequations}
\label{fair.training.constr}
\begin{align}
        \min_{\coef \in \R^n} & \sum_{i=1}^m\mathcal{L}\left(\coef^\top\bm{x}_i, y_i\right)\\
    \text{s.t. }& \widehat{\mathrm{DP}}(\coef) \leq \epsilon,\label{fair.training.constr_fair}
\end{align}
\end{subequations}
where 
\begin{equation} \label{def.dp.hat} \widehat{\mathrm{DP}}(\coef) = \max_{b\in \R}\left| \frac{1}{m_1}\sum_{i=1: a_i = 1}^m \mathbbm{1}(\coef^\top\bm{x}_i > b) -\frac{1}{m} \sum_{i=1}^m \mathbbm{1}(\coef^\top\bm{x}_i > b)\right| \cdot
\end{equation}
An alternative to the fair regression problem \eqref{fair.training.constr} is the regularized version with the fairness penalty in the objective: 
\begin{equation}  \label{fair.training.regularized}
    \min_{\coef \in \R^n} \sum_{i=1}^m\mathcal{L}\left(\coef^\top\bm{x}_i, y_i\right) + \lambda \widehat{\mathrm{DP}}(\coef).
\end{equation}

In some cases, a one-sided constraint or regularization is typically sufficient, ensuring that the outcomes of the protected group are competitive. The resulting regularized version is 
\begin{equation}  \label{fair.training.oneside.regularized}
    \min_{\coef \in \R^n} \sum_{i=1}^m\mathcal{L}\left(\coef^\top\bm{x}_i, y_i\right) + \lambda \max_{b\in \R} \left\{\frac{1}{m_1}\sum_{i=1: a_i = 1}^m \mathbbm{1}(\coef^\top\bm{x}_i > b) -\frac{1}{m} \sum_{i=1}^m \mathbbm{1}(\coef^\top\bm{x}_i > b)\right\} \cdot
\end{equation}

The difficulty in solving problems \eqref{fair.training.constr}, \eqref{fair.training.regularized} and \eqref{fair.training.oneside.regularized} stems from the non-convexity and discontinuity of the linear combination of indicators used to define demographic parity. Note that \eqref{fair.training.constr_fair} can be interpreted as an infinite number of constraints, one for each possible value of $b\in \R$. For tractability, we discretize the values for $b$ as it is common in the literature \citep{agarwal2019fair} and measure fairness over a grid when optimizing fair regression problems.  Given $\ell$ thresholds, $b_1 < b_2 < \dots < b_\ell$, we define a discretized version of the distance from demographic parity:
\begin{equation} \label{def.dp.hat.ell} \widehat{\mathrm{DP}}_\ell(\coef) = \max_{j\in [\ell]}\left| \frac{1}{m_1}\sum_{i=1: a_i = 1}^m \mathbbm{1}(\coef^\top\bm{x}_i > b_j) -\frac{1}{m} \sum_{i=1}^m \mathbbm{1}(\coef^\top\bm{x}_i > b_j)\right| \cdot
\end{equation} To build fair regression models, we use the discretized relaxation of \eqref{fair.training.constr}: 
\begin{align} \label{eq:fairContinuous}
     \min_{\coef \in \R^n}  \sum_{i=1}^m\mathcal{L}\left(\coef^\top\bm{x}_i, y_i\right)  : \  \widehat{\mathrm{DP}}_\ell(\coef)\leq \epsilon. 
\end{align}
In principle, if the discretization is sufficiently granular, \eqref{eq:fairContinuous} is a close approximation of \eqref{fair.training.constr}. Similarly, we consider the discretized relaxation of \eqref{fair.training.regularized} and \eqref{fair.training.oneside.regularized} given by 
\begin{align}  
    \text{\fairabs} 	&\min_{\coef \in \R^n} \sum_{i=1}^m\mathcal{L}\left(\coef^\top\bm{x}_i, y_i\right) + \lambda \max_{j\in [\ell]} \left\{\left|\frac{1}{m_1}\sum_{i=1: a_i = 1}^m \mathbbm{1}(\coef^\top\bm{x}_i > b_j) -\frac{1}{m} \sum_{i=1}^m \mathbbm{1}(\coef^\top\bm{x}_i > b_j)\right|\right\},\label{eq:fairContinuousRegAbs} \\
    \text{\fairreg}  &\min_{\coef \in \R^n} \sum_{i=1}^m\mathcal{L}\left(\coef^\top\bm{x}_i, y_i\right) + \lambda \max_{j\in [\ell]} \left\{\frac{1}{m_1}\sum_{i=1: a_i = 1}^m \mathbbm{1}(\coef^\top\bm{x}_i > b_j) -\frac{1}{m} \sum_{i=1}^m \mathbbm{1}(\coef^\top\bm{x}_i > b_j)\right\}, 
    \label{eq:fairContinuousReg}
\end{align}
respectively. Even with a trivial loss function of zero and $\ell=1$, solving \eqref{eq:fairContinuous}--\eqref{eq:fairContinuousReg} is NP-hard, which can be shown via a reduction from maximally feasible linear subsystems \citep{amaldi1995complexity}.
To solve \eqref{eq:fairContinuous}--\eqref{eq:fairContinuousReg}, we propose introducing binary variables to represent the indicators involved in the fairness constraint and regularizer. 


\subsection{Related work}
Optimizing fairness metrics such as distance to demographic parity is particularly challenging for regression because the metrics are based on the entire distribution of predictions. As such, many fairness regularizers make use of summary statistics \citep{calders2013controlling, woodworth17a, komiyama18a} to allow for efficient optimization, but they fail to fully capture the distribution of predictions, ultimately leading to less fair models \citep{oneto2020fairness}. Another approach is to build convex approximations for deviation measures of error rates across groups, such as linear approximations proposed in the works of \citet{zafar2017fairnessB} and \citet{donini2018empirical}, and non-linear but convex approximations proposed by \citet{wu2019convexity}. These approaches are predominantly designed for fair classification and additionally suffer from approximations being too loose to produce fair models, as thoroughly analyzed in \citet{lohaus2020too}. 

Research on fair regression is limited compared to classification. \citet{berk2017convex} design a convex fairness regularizer for least squares and logistic regression. \citet{do2022fair} propose a similar convex regularizer, extending results for generalized linear models and providing theoretical guarantees. Additionally, \citet{agarwal2019fair} introduce a reduction-based algorithm, transforming the fair regression training problem into a Lagrangian min-max problem, resulting in a procedure that solves a sequence of weighted classification problems. 


Recent work has utilized mixed integer optimization techniques to model and solve the training problems considering exact fairness metrics. Mixed integer formulations for fair decision trees, decision rules, and SVMs have been proposed for fair classification \citep{aghaei2019learning, jo2023learning, lawless2021fair, ye2020unbiased}. \citet{ye2024distributionally} tackle Wasserstein-fair regression by providing a fair stochastic optimization framework. This work is part of a growing research direction that uses mixed integer optimization to train models with underlying discrete structures that have been traditionally approximated with convex surrogates \citep{atamturk2019rank, bertsimas2020sparse, atamturk2020safe, hazimeh2022sparse, deza-kdir22}. 
The exact formulation of the discrete structures has been demonstrated to improve the statistical performance of the models, albeit with increased computational requirements.

\subsection{Contributions and Outline}\label{sec:contributions}
\paragraph{Contributions}

We develop novel mixed-integer optimization methods for training fair regression models. Our key contribution is a strong convex relaxation of the fair regression training problem, which can be used directly as a fair estimator, can be combined with branch-and-bound methods to solve moderately-sized problems to optimality, or can be enhanced with coordinate descent methods to produce solutions for larger datasets quickly. The strength of the proposed relaxation stems from the convexification of the discrete substructure introduced by fairness metrics and a nonlinear loss function. Through an empirical study, we demonstrate that the proposed techniques improve the out-of-sample performance of models compared to alternatives in the literature. 

Formally, given a vector $\bm{b} \in \R^\ell$ and a convex (loss) function $\mathcal{L}$, we study and describe the closed convex hull of the set 
$$X = \bigg \{(v,\bm{z}, s) \in \R \times \{0,1\}^\ell \times \R: \mathcal{L}(v) \leq s,  (v-b_j)z_j \geq 0, (b_j-v)(1-z_j) \geq 0, ~j \in [\ell] \bigg \} \cdot 
$$
The set $X$ represents the epigraph of univariate loss function$\mathcal{L}$ along with binary variables indicating which interval $[b_j, b_{j+1}]$ the argument of $\mathcal{L}$ falls into. We show that $X$ can be used to model a single-factor or a single-observation fair regression as a building block for modeling and solving the general fair regression problems \eqref{eq:fairContinuous}--\eqref{eq:fairContinuousReg}.

\paragraph{Outline}

The rest of the paper is organized as follows.
In \S \ref{sec:formulations}, we provide mixed-integer formulations and strong convex relaxation for this problem. In \S \ref{sec:coordinate.descent} we provide an efficient coordinate descent method for the fair regression problem. In \S \ref{sec:comparisons}, we discuss in further detail relevant approaches to fair learning and contrast with our proposed methodology. Finally, in \S \ref{sec:experiments} we provide numerical experiments on synthetic and real datasets.

\section{MIO formulations for fair regression}\label{sec:formulations}

%

First, in \S\ref{sec:bigM} we discuss natural formulations for \fairreg\; defined in \eqref{eq:fairContinuousReg}; unfortunately, such formulations have weak convex relaxations and are ineffective in practice. 
Then, in \S\ref{sec:convexification}, we give stronger formulations based on the convexification of the single-observation subproblem to better handle the fairness constraint and regularization.

\subsection{Basic formulations}\label{sec:bigM}

By introducing auxiliary variables $v_i \in \R$ to represent $\coef^\top \bm{x_i}$, and binary variables $z_{ij} \in \{0,1\}$ as indicators for $\mathbbm{1}(\coef^\top\bm{x}_i > b_j)$,
the first MIO formulation we consider for \fairreg\; is stated as 
\begin{subequations}\label{eq:fairContinuousReg2}
\begin{align} 
 \min_{\coef,t,\bm{v},\bm{z}}\;& \sum_{i=1}^m\mathcal{L}\left(\coef^\top\bm{x}_i, y_i\right) + \lambda t\\
    \text{s.t.}\;&\frac{1}{m_1}\sum_{i=1: a_i = 1}^m z_{ij} -\frac{1}{m} \sum_{i=1}^m z_{ij}\leq t,& j\in [\ell],\\
    &v_{i}=\coef^\top \bm{x_i}, & i\in [m],\\
    &\coef\in \R^n,\;t\in \R,\; (\bm{v},\bm{z})\in V,
\end{align}
\end{subequations}
where all non-convexities and discontinuities are subsumed in the set $V$:
\begin{align*}
    V := \left\{(\bm{v},\bm{z}) \in \R^{m}\times \{0,1\}^{m\times \ell}: z_{ij}=\mathbbm{1}(v_i>b_j),~ i\in [m],~j\in [\ell] \right\}.
\end{align*}
To reformulate problems with set $V$ as mixed-integer problems, consider the set  
\begin{align*}
    \bar{V} = \Big\{(\bm{v},\bm{z}) \in \R^m\times \{0,1\}^{m\times \ell}: &~(v_i-b_j)z_{ij}\geq 0, (v_i-b_j)(1-z_{ij})\leq 0, ~i\in [m],\; j \in [\ell]\Big\}.
\end{align*}
Observe that if $(\bm{v},\bm{z})\in \bar{V}$ and $v_i>b_j$, then $z_{ij}=1$ due to constraint $(v_i-b_j)(1-z_{ij})\leq 0$; similarly, if $v_i<b_j$, then $z_{ij}=0$ due to constraint $(v_i-b_j)z_{ij}\geq 0$. However, if $v_i=b_j$, then both $z_{ij}=0$ and $z_{ij}=1$ are feasible, and in particular the equality $z_{ij}=\mathbbm{1}(v_j>b_j)$ does not hold if $z_{ij}=1$. The ambiguity when $v=b_j$ is caused by the lack of lower semi-continuity of \eqref{eq:fairContinuousReg2} and cannot be avoided.
\begin{proposition}
$\mathrm{cl}(V)=\bar{V}$.
\end{proposition}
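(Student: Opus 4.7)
The plan is to prove the two inclusions $\mathrm{cl}(V)\subseteq \bar V$ and $\bar V\subseteq \mathrm{cl}(V)$ separately. For the first, I would start by checking the easy inclusion $V\subseteq \bar V$: if $z_{ij}=\mathbbm{1}(v_i>b_j)$, then in the case $v_i>b_j$ we have $z_{ij}=1$, so $(v_i-b_j)z_{ij}\geq 0$ and $(v_i-b_j)(1-z_{ij})=0\leq 0$; in the case $v_i\leq b_j$ we have $z_{ij}=0$, so $(v_i-b_j)z_{ij}=0\geq 0$ and $(v_i-b_j)(1-z_{ij})=v_i-b_j\leq 0$. Since $\bar V$ is the intersection of the closed set $\R^m\times \{0,1\}^{m\times \ell}$ with the zero-level sets of the continuous (bilinear) functions $(\bm v,\bm z)\mapsto (v_i-b_j)z_{ij}$ and $(\bm v,\bm z)\mapsto (v_i-b_j)(1-z_{ij})$, it is closed. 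Thus $\mathrm{cl}(V)\subseteq \mathrm{cl}(\bar V)=\bar V$.

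For the reverse inclusion, I would fix an arbitrary $(\bm v,\bm z)\in \bar V$ and exhibit a sequence in $V$ converging to it. Since $b_1<b_2<\dots<b_\ell$ are distinct, for each $i$ there is at most one index, call it $j(i)$, with $v_i=b_{j(i)}$. I would take $\bm z^{(n)}=\bm z$ for every $n$ and perturb the continuous coordinates as follows: if $v_i\neq b_j$ for all $j$, set $v_i^{(n)}=v_i$; if $v_i=b_{j(i)}$, set $v_i^{(n)}=v_i+1/n$ when $z_{i,j(i)}=1$ and $v_i^{(n)}=v_i-1/n$ when $z_{i,j(i)}=0$. Clearly $(\bm v^{(n)},\bm z^{(n)})\to (\bm v,\bm z)$.

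The main verification is to show that $(\bm v^{(n)},\bm z^{(n)})\in V$ for all sufficiently large $n$, i.e.\ that $z_{ij}=\mathbbm{1}(v_i^{(n)}>b_j)$ for every $(i,j)$. For the unique pair $(i,j(i))$ with $v_i=b_j$, the perturbation is chosen precisely so that the sign of $v_i^{(n)}-b_j$ matches the given $z_{ij}$. For all other pairs $v_i\neq b_j$, the inequalities defining $\bar V$ already force $z_{ij}=\mathbbm{1}(v_i>b_j)$ (if $v_i>b_j$ then $z_{ij}=0$ would violate $(v_i-b_j)(1-z_{ij})\leq 0$, and if $v_i<b_j$ then $z_{ij}=1$ would violate $(v_i-b_j)z_{ij}\geq 0$); since $|v_i-b_j|$ is a strictly positive constant while the perturbation is of order $1/n$, the indicator $\mathbbm{1}(v_i^{(n)}>b_j)$ agrees with $\mathbbm{1}(v_i>b_j)$ once $n$ is large enough.

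I do not anticipate any serious obstacle. The only delicate point is the choice of perturbation direction at the equality $v_i=b_{j(i)}$, which is precisely where the failure of lower semi-continuity highlighted by the authors occurs; letting the given value of $z_{i,j(i)}$ dictate whether to perturb upward or downward resolves it. Because each $v_i$ coincides with at most one threshold, there is never a conflict between the required perturbation directions across different $j$.
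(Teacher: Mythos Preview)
Your proposal is correct and follows essentially the same approach as the paper: both verify $V\subseteq\bar V$, observe that $\bar V$ is closed, and then approximate an arbitrary point of $\bar V$ by perturbing $v_i$ slightly at the (at most one) threshold $b_{j(i)}$ it touches, with the direction of the perturbation dictated by $z_{i,j(i)}$. Two cosmetic remarks: the sets you intersect are sublevel/superlevel sets of the bilinear maps rather than zero-level sets, and your downward perturbation in the case $z_{i,j(i)}=0$ is in fact unnecessary since $\mathbbm{1}(b_{j(i)}>b_{j(i)})=0$ already matches $z_{i,j(i)}$---the paper exploits this and only perturbs upward for indices with $z_{i,j(i)}=1$.
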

\begin{proof}
Since $V\subseteq \bar{V}$ and $\bar{V}$ is closed (as a finite union of closed sets), the inclusion $\mathrm{cl}(V)\subseteq \bar{V}$ holds. To prove the reverse inclusion, let $(\bm{\bar v},\bm{\bar z})\in \bar{V}$, and let $S=\left\{i\in [m]: \exists j\in [\ell] \text{ s.t. }\bar v_{i}=b_{j}\text{ and }\bar z_{ij}=1\right\}$. Then letting $(\bm{v_\epsilon},\bm{z_\epsilon})$ such that $(v_\epsilon)_i=\bar{v_i}+\epsilon$ for $i\in S$, $(v_\epsilon)_i=\bar v_i$ for $i\in [m]\setminus S$ and $\bm{z_\epsilon}=\bm{\bar z}$, we find that $(\bm{v_\epsilon},\bm{z_\epsilon})\in V$ and $\lim_{\epsilon\to 0_+}(\bm{v_\epsilon},\bm{z_\epsilon})=(\bm{\bar v},\bm{\bar z})$, concluding the proof.
\end{proof}

Using the set $\bar{V}$, we obtain the mixed-integer programming formulation of \fairreg
\begin{subequations}\label{eq:fairMipReg2}
\begin{align} 
    \min_{\coef,t,\bm{v},\bm{z}}\;& \sum_{i=1}^m\mathcal{L}\left(\coef^\top\bm{x}_i, y_i\right) + \lambda t\\
    \text{s.t.}\;&\frac{1}{m_1}\sum_{i=1: a_i = 1}^m z_{ij} -\frac{1}{m} \sum_{i=1}^m z_{ij}\leq t,& j\in [\ell],\\ 
    & v_{i}=\coef^\top \bm{x_i},& i\in [m],\\
    &\coef\in \R^n,\;t\in \R,\; (\bm{v},\bm{z})\in \bar{V}.
\end{align}
\end{subequations}
Note that \eqref{eq:fairMipReg2} is a mixed-integer problem with non-convex quadratic constraints. While some off-the-shelf MIO solvers admit such formulations, our experience indicates that they are not effective in solving \eqref{eq:fairMipReg2}. A better implementation is obtained by linearizing the quadratic constraints: given a sufficiently large number $M$, the natural mixed-integer convex formulation\begin{subequations} \label{mod.bigM}
    \begin{align}
      \quad \min_{\coef, \bm{z}, t} \; & \sum_{i=1}^m \mathcal{L}(\coef^\top \bm{x}_i, y_i)+\lambda t\\
   \text{\basicmodel}  \quad      \text{s.t. }&\frac{1}{m_1}\sum_{i=1: a_i = 1}^m z_{ij} -\frac{1}{m} \sum_{i=1}^m z_{ij}\leq t, & \hspace*{-2cm}j\in [\ell],\label{bigM:fair}\\
        & \coef^\top \bm{x}_i - b_j \leq M z_{ij}, & i \in [m], j \in [\ell], \label{bigM:ub} \\
        &\coef^\top \bm{x}_i - b_j \geq -M (1-z_{ij}), & i \in [m], j \in [\ell], \label{bigM:lb} \\
        & \coef\in\R^n, \bm{z}\in \{0,1\}^{m\times \ell},t\in \R,
    \end{align}
\end{subequations}
is equivalent to \eqref{eq:fairMipReg2}. 

Unfortunately, \basicmodel\ suffers from a weak relaxation, resulting in prohibitive solution times with branch-and-bound algorithms. In fact, the lower bound obtained by solving the continuous relaxation of \basicmodel\ is, at best, equal to that of the linear regression problem that disregards fairness altogether.
To show this, consider the continuous relaxation of \basicmodel\ with constraints $\bm{z}\in \{0,1\}^{m\times \ell}$ relaxed to $\bm{0}\leq \bm{z}\leq \bm{1}$. Let $\coef^*_{\mathrm{unfair}} = \arg\min_{\coef}\sum_{i=1}^m \mathcal{L}(\coef^\top \bm{x}_i, y_i)$ be an optimal solution to the regression training problem ignoring fairness by setting $\lambda=0$ in \fairreg. Note that for large enough $M$, the solution $\left(\coef = \coef^*_{\mathrm{unfair}}, ~\bm{z} = \bm{1/2}, t=0 \right)$ is always feasible for the continuous relaxation of \basicmodel\ and has the same objective value as the linear regression problem without fairness regularization.

A natural approach to improve the relaxation is to design improved relaxations of set $\bar{V}$. Observe that this set is similar to the \textit{mixed vertex packing set} \citep{ANS:mvp} or the \emph{mixing set} \citep{gunluk2001mixing}, especially when formulated using big-$M$ constraints \eqref{bigM:ub}-\eqref{bigM:lb}, a set that is well-studied in the MIO literature. However, strong formulations based on mixed-integer linear optimization techniques exploit bounds on the continuous variable, which would require assuming that $M$ is a small number. In fact, a stronger convexification for $\bar{V}$ is not possible, as shown in the next proposition.
\begin{proposition}\label{prop:barV}
$\mathrm{cl\;conv}(\bar{V})=\Big\{ (v,\bm{z}) : \R^m\times [0,1]^{m\times \ell} : z_{i+1} \leq z_i, ~i \in [\ell-1] \Big\}$.
\end{proposition}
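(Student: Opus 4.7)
The plan is to prove the two inclusions separately. For the forward direction, fix any integer point $(\bm{v},\bm{z}) \in \bar{V}$. If $z_{i,j+1}=1$, then the constraint $(v_i-b_{j+1})z_{i,j+1}\ge 0$ gives $v_i \ge b_{j+1} > b_j$, after which $(v_i-b_j)(1-z_{ij})\le 0$ forces $z_{ij}=1$. Hence every point of $\bar{V}$ satisfies the monotonicity $z_{i,j+1}\le z_{ij}$, and since these are linear inequalities, they are preserved when taking the convex hull and its closure.

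For the reverse direction I would first observe that the bilinear constraints defining $\bar{V}$ couple $v_i$ only with the $i$-th row of $\bm{z}$, so $\bar{V}=\prod_{i=1}^m \bar{V}_i$ is a Cartesian product. Both $\mathrm{conv}$ and closure distribute over products, reducing the argument to a single observation, i.e.~to the set $\bar{V}_1\subset \R\times\{0,1\}^\ell$. The integer points of $\bar{V}_1$ are precisely $\bigcup_{k=0}^{\ell}\{(v,\bm{e}^k): v\in[b_k,b_{k+1}]\}$, where $\bm{e}^k\in\{0,1\}^\ell$ is the ``staircase'' with $e^k_j=1$ iff $j\le k$, and $b_0:=-\infty$, $b_{\ell+1}:=+\infty$.

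To show that any $(v,\bm{z})$ with $0\le z_{j+1}\le z_j\le 1$ lies in $\mathrm{cl\;conv}(\bar{V}_1)$, I would perturb $\bm{z}$ to $\bm{z}(\epsilon)$ so that the gaps $\alpha_k(\epsilon):= z_k(\epsilon)-z_{k+1}(\epsilon)$, with the conventions $z_0(\epsilon):=1$ and $z_{\ell+1}(\epsilon):=0$, are all strictly positive --- for example via $z_j(\epsilon)=(1-\epsilon)z_j+\epsilon(\ell-j+1)/(\ell+1)$. Telescoping then yields $\bm{z}(\epsilon)=\sum_{k=0}^{\ell}\alpha_k(\epsilon)\bm{e}^k$, which is already a valid decomposition of the $\bm{z}$-component. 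To lift this to a convex combination in $\mathrm{conv}(\bar{V}_1)$, I would pick $v_k\in[b_k,b_{k+1}]$ so that $\sum_k \alpha_k(\epsilon) v_k = v$; since $\alpha_0(\epsilon),\alpha_\ell(\epsilon)>0$ and the endpoint intervals extend to $\pm\infty$, this single linear equation is solvable for every $v\in\R$. Sending $\epsilon\to 0^+$ would then place $(v,\bm{z})$ in $\mathrm{cl\;conv}(\bar{V}_1)$.

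The delicate part is the closure step itself: a pure staircase $\bm{e}^k$ paired with $v\notin[b_k,b_{k+1}]$ is \emph{not} in $\mathrm{conv}(\bar{V}_1)$, because matching the integer pattern $\bm{e}^k$ exactly forces all mass onto $\bm{e}^k$, pinning $v$ into $[b_k,b_{k+1}]$. The perturbation strictly into the interior of $[0,1]^\ell$ is exactly what frees the $v$-coordinate, and only the limit $\epsilon\to 0^+$ recovers these boundary configurations --- which is precisely why the proposition holds for $\mathrm{cl\;conv}$ rather than $\mathrm{conv}$, and which represents the only non-routine part of the argument.
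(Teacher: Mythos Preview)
Your argument is correct and follows essentially the same route as the paper: reduce to a single observation via the Cartesian product structure, decompose $\bm{z}$ along the staircase vectors $\bm{e}^k$, exploit the unbounded $v$-range at the two endpoint staircases to match any target $v$, and pass to the closure via a limit. The only cosmetic difference is that the paper first isolates the fact that $(v,\bm{0})\in\mathrm{cl\;conv}(\bar U)$ for all $v$ and then bootstraps, whereas you perturb $\bm{z}$ so that \emph{all} gaps $\alpha_k(\epsilon)$ are strictly positive and solve for the $v_k$'s in one shot; your version is arguably tidier, but the underlying idea is the same.
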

\begin{proof}
Note that $\bar{V}$ is the Cartesian product of $m$ sets of the form $\bar{U}= \Big\{(v,\bm{z}) \in \R\times \{0,1\}^{\ell}: ~(v-b_j)z_{j}\geq 0, (v-b_j)(1-z_{j})\leq 0, ~j \in [\ell]\Big\},$ thus is suffices to prove the result for $\bar{U}$.
First, we prove that $(v,\bm{0})\in \text{cl conv}(\bar{U})$ for all $v\in \R$. Indeed, if $v\leq b_1$, then $(v,\bm{0})\in \bar{U}$; otherwise $(v,\bm{0})=\lim_{\lambda\to 0}(1-\lambda)(b_1,\bm{0})+\lambda(b_\ell+(v-b_1)/\lambda,\bm{1})$ with both $(b_1,\bm{0})\in \bar{U}$ and $(b_\ell+(v-b_1)/\lambda,\bm{1})\in \bar{U}$ for all $\lambda\in (0,1)$. Second, note that an arbitrary point $(v,\bm{z})\in \R\times [0,1]^\ell$ such that $z_{i+1} \leq z_i, ~\forall i \in [\ell-1]$ satisfies 
\begin{align*}
    (v,\bm{z})=\lim_{\lambda\to 0}(1-\lambda)\left[z_\ell(b_\ell, \bm{1}) + \sum_{j=1}^{\ell-1} u_j b_j, \sum_{i=1}^j\bm{e_i})\right] + \lambda\left(\frac{v}{\lambda}-\frac{1-\lambda}{\lambda}\left[z_\ell b_\ell + \sum_{j=1}^{\ell-1} u_j b_j\right], \bm{0}\right),
\end{align*}
where $u_j = z_j - z_{j+1}$, $j=1, \ldots, \ell-1$,
and all points belong to $\bar{U}$, thus $(v,\bm{z})\in \text{cl conv}(\bar{U})$. 
\end{proof}

\begin{remark}
	Formulation \basicmodel\; solves the lower-semicontinuous relaxation of \eqref{eq:fairContinuousReg} given by 
	\begin{equation*} 
	  \min_{\coef \in \R^n} \sum_{i=1}^m\mathcal{L}\left(\coef^\top\bm{x}_i, y_i\right) + \lambda \max_{j\in [\ell]} \left\{\left(\frac{1}{m_1}-\frac{1}{m}\right)\sum_{i=1: a_i = 1}^m \mathbbm{1}(\coef^\top\bm{x}_i > b_j) -\frac{1}{m} \sum_{i=1: a_i=0}^m \mathbbm{1}(\coef^\top\bm{x}_i \geq b_j)\right\},
	\end{equation*}
	as do the stronger formulations that we discuss in \S\ref{sec:convexification}. 
	Moreover, if constraints \eqref{bigM:fair} are replaced with $\left|\frac{1}{m_1}\sum_{i=1: a_i = 1}^m z_{ij} -\frac{1}{m} \sum_{i=1}^m z_{ij}\right|\leq \epsilon$ for all $j\in \ell$, then the ensuing relaxation solves a closed relaxation of \eqref{eq:fairContinuous}.
\end{remark}

\subsection{A strong convex relaxation}\label{sec:convexification}
To overcome the weak relaxations induced by the set $\bar{V}$, we propose a strong extended convex relaxation of \fairreg.
The strong relaxation is obtained by considering a more informative set that introduces the epigraph of the loss function to $\bar{V}$:$$\bar{V}_{\mathcal{L}} = \Big\{(\bm{v}, \bm{z}, \bm{s}) \in \R^m \times \{0,1\}^{m\times\ell} \times \R^m: \mathcal{L}(v_i,y_i) \leq s_i, (\bm{v}, \bm{z}) \in \bar{V}\Big\} \cdot$$
Observe that problem \fairreg\ can be rewritten as 
\begin{subequations}\label{eq:fairMipRegEpi}
\begin{align} 
    \min_{\coef,t,\bm{v},\bm{z},\bm{s}}\;& \sum_{i=1}^ms_i + \lambda t\\
    \text{s.t.}\;& \frac{1}{m_1}\sum_{i=1: a_i = 1}^m z_{ij} -\frac{1}{m} \sum_{i=1}^m z_{ij}\leq t,& j\in [\ell],\\
    &v_{i}=\coef^\top \bm{x_i},&i\in [m],\\
    &\coef\in \R^n,\;t\in \R,\; (\bm{v},\bm{z},\bm{s})\in \bar{V}_{\mathcal{L}},
\end{align}
\end{subequations}
and a relaxation \ignore{of \eqref{eq:fairMipRegEpi} }can be obtained by replacing $(\bm{v},\bm{z},\bm{s})\in \bar{V}_{\mathcal{L}}$ with $(\bm{v},\bm{z},\bm{s})\in \text{cl conv}(\bar{V}_{\mathcal{L}})$. In this section, we provide an explicit characterization of this relaxation. 

Observe that set $\bar{V}_{\mathcal{L}}$ is the Cartesian product of sets of the form
$$X_i = \{(v,\bm{z}, s) \in \R \times \{0,1\}^\ell \times \R: \mathcal{L}(v,y_i) \leq s, (v-b_j)z_j \geq 0, (b_j-v)(1-z_j) \geq 0, ~ j \in [\ell]\},$$
that is, $(\bm{v},\bm{z},\bm{s})\in \bar{V}_{\mathcal{L}}\Leftrightarrow (v_i,\bm{z_i},s_i)\in X_i, i\in [m]$. 
In Proposition~\ref{prop:valid} we derive valid inequalities for $X_i$ in an extended formulation. The formulation makes use of the perspective reformulation technique, used to model indicators on decision variables. The closure of the perspective of a proper convex function $g : \R \rightarrow \R \cup \{\infty\}$ is defined as the function $h:\R \times \R_+ \rightarrow \R \cup \{\infty\}$ such that

$$ h(u, z) = \begin{cases}
    zg(u/z) & \text{if } z > 0,\\
    \lim_{\tilde{z}\to 0} \tilde{z}g(u/\tilde{z}) & \text{if } z=0.
\end{cases}
$$
Recall that the perspective of a convex function is convex \citep{rockafellar1970convex}. And if $g(0)$ is finite, then it holds
\begin{equation}
    h(0, 0) = \lim_{\tilde{z}\to 0^+}\tilde{z}g(0/\tilde{z}) = 0. \label{eq:persp.at.zero}
\end{equation}

In Proposition \ref{prop:valid}, we present a convex extended relaxation for \eqref{eq:fairMipRegEpi}.
 Intuitively, the additional variables $\bm{p}$ we introduce model the portion of each prediction that falls in the intervals defined by the breakpoints, see Figure~\ref{fig:defP}. Specifically, $p_{ij}$ represents the amount by which prediction $\bm{w}^\top\bm{x}_i$ exceeds tis lower bound $b_j$ on the interval $[b_j, b_{j+1}]$ for $j=1, \ldots, \ell$.
For simplicity of notation, we omit the dependence on $i$, write $\mathcal{L}(v)$ instead of $\mathcal{L}(v,y_i)$, and study set $X$ defined in \S\ref{sec:contributions}, whose definition we repeat for convenience: $$X = \{(v,\bm{z}, s) \in \R \times \{0,1\}^\ell \times \R: \mathcal{L}(v) \leq s,  (v-b_j)z_j \geq 0, (b_j-v)(1-z_j) \geq 0, ~j \in [\ell]\}.$$
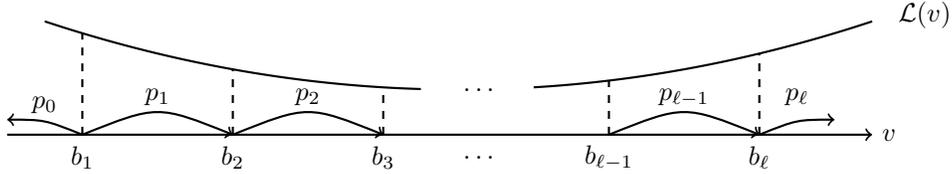
\begin{figure}[!h]
\centering
\begin{tikzpicture}[thick]
    
    \draw[->] (0, 0) -- (11.5, 0) node[right] {$v$}; 

    \draw[domain=0.5:5.5, smooth, variable=\x] plot ({\x}, {0.03*(\x - 6)*(\x - 6) + 0.6});
    \draw[domain=7:11.5, smooth, variable=\x] plot ({\x}, {0.03*(\x - 6)*(\x - 6) + 0.6});
    
    \node at (6.3, 0.6) {$\dots$};

    \foreach \i in {1,3,5,8,10} {
        \draw[dashed] (\i, 0) -- (\i, {0.03*(\i - 6)*(\i - 6) + 0.6});
    }

    \node at (1, -0.3) {$b_1$};
    \node at (3, -0.3) {$b_2$};
    \node at (5, -0.3) {$b_3$};
    \node at (8, -0.3) {$b_{\ell-1}$};
    \node at (10, -0.3) {$b_\ell$};

    \draw[<-, thick] (0, 0.2) .. controls (0.5, 0.2) ..  (1, 0); 
    \node at (0.5, 0.4) {$p_0$};

    \draw[->, thick] (1, 0) .. controls (2, 0.4) .. (3, 0); 
    \draw[->, thick] (3, 0) .. controls (4, 0.4) .. (5, 0.0); 

    \draw[->, thick] (8, 0) .. controls (9, 0.4) .. (10, 0); 
    \draw[->, thick] (10, 0) .. controls (10.5, 0.2) .. (11, 0.2); 
    \node at (10.5, 0.5) {$p_\ell$}; 

    \node at (2, 0.5) {$p_1$};
    \node at (4, 0.5) {$p_2$};
    \node at (9, 0.5) {$p_{\ell-1}$};
    \node at (12.2, 1.6) {$\mathcal{L}(v)$};

    \node at (6.3, -0.3) {$\dots$};

\end{tikzpicture}
\caption{Visual representation of the extended formulation of $X_i$. We introduce variables $p_j, j = 0, \ldots, \ell$, which model the piece of the prediction $v$ on the sub-intervals defined by $b_1, \ldots, b_\ell$.}
\label{fig:defP}
\end{figure}

\begin{proposition}[Validity] \label{prop:valid} Assume $\mathcal{L}(b_j)$ is finite for every $j \in [\ell]$. If $(v,\bm{z}, s)\in X$, then there exists $(p_0,\bm{p})\in \R_+^{1+\ell}$ such that 
\begin{subequations}\label{eq:validX}
\small\begin{align}
&v = b_1 - p_{0} + \sum_{j=1}^\ell p_{j},\label{eq:validX_v}\\
&(b_{j+1}-b_j)z_{j+1}\leq p_j\leq (b_{j+1}-b_j)z_j,\quad j\in [\ell],\label{eq:validX_z}\\
&s\geq (1-z_{1})\mathcal{L}\left(b_1-\frac{p_{0}}{1-z_{1}}\right) + \sum_{j=1}^{\ell-1} \left(z_{j} - z_{j+1}\right)\mathcal{L}\left(b_j + \frac{p_{j} - z_{j+1}(b_{j+1}-b_j)}{z_{j} - z_{j+1}}\right) + z_{\ell}\mathcal{L}\left(b_\ell + \frac{p_{\ell}}{z_{\ell}}\right).\label{eq:validX_L}
\end{align}\normalsize
\end{subequations}
\end{proposition}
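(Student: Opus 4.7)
The plan is to construct the auxiliary vector $(p_0,\bm{p})$ explicitly from $(v,\bm{z},s)\in X$ and to verify the three conditions by a case analysis on where $v$ lies relative to the breakpoints.

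First, I would argue that the defining constraints of $X$ force $\bm{z}$ to be nonincreasing. Indeed, if $z_{j+1}=1$, then $(v-b_{j+1})z_{j+1}\geq 0$ gives $v\geq b_{j+1}>b_j$, and then $(b_j-v)(1-z_j)\geq 0$ forces $z_j=1$. Consequently, there is a unique index $k\in\{0,1,\ldots,\ell\}$ with $z_1=\cdots=z_k=1$ and $z_{k+1}=\cdots=z_\ell=0$. The same two inequalities additionally yield $v\geq b_k$ when $k\geq 1$ and $v\leq b_{k+1}$ when $k\leq\ell-1$, so $v$ is pinned down (up to ambiguity at a breakpoint) to the interval $[b_k, b_{k+1}]$.

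Next, I would set
\[
p_0 := \max\{0,\; b_1-v\},\qquad p_j := \begin{cases} b_{j+1}-b_j, & j<k,\\ v-b_j, & j=k,\ 1\leq k\leq\ell-1,\\ 0, & j>k,\end{cases}
\]
together with $p_\ell := v-b_\ell$ when $k=\ell$. Nonnegativity of each $p_j$ is immediate from the bracketing of $v$. A direct telescoping computation in the three regimes $k=0$, $1\leq k\leq\ell-1$, $k=\ell$ verifies \eqref{eq:validX_v}. For \eqref{eq:validX_z} I would split on the pair $(z_j,z_{j+1})\in\{(0,0),(1,0),(1,1)\}$: when both entries are equal, $p_j$ coincides with both the lower and upper bound; when $(z_j,z_{j+1})=(1,0)$, which happens only at $j=k$, one has $p_j=v-b_k\in[0,b_{k+1}-b_k]$ exactly because $b_k\leq v\leq b_{k+1}$.

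Finally, \eqref{eq:validX_L} is verified by case analysis on $k$. For each $k$, exactly one of the three expressions on the right-hand side survives: the first term when $k=0$, the summand indexed by $j=k$ when $1\leq k\leq\ell-1$, and the last term when $k=\ell$. A short algebraic check shows that in each surviving case the perspective coefficient equals $1$ and the inner argument simplifies to $v$, so the surviving term equals $\mathcal{L}(v)$. Every other term has a vanishing perspective coefficient, and the construction ensures that the corresponding $p_j$ (or $p_0$) is also zero, so \eqref{eq:persp.at.zero} applies and that term equals $0$. Combined with $\mathcal{L}(v)\leq s$ from membership in $X$, this yields \eqref{eq:validX_L}. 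The main technical point is bookkeeping: one must check that every vanishing perspective coefficient is matched by a vanishing argument, so that the convention $h(0,0)=0$ is legitimately invoked rather than producing an indeterminate $0\cdot\infty$ form.
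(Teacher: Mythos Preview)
Your proposal is correct and follows essentially the same route as the paper: both construct $(p_0,\bm{p})$ as the ``piece of $v$ in each interval'' and verify \eqref{eq:validX_v}--\eqref{eq:validX_L} by case analysis, with every non-surviving perspective term collapsing via \eqref{eq:persp.at.zero}. The only cosmetic difference is that you first extract the switching index $k$ of the monotone vector $\bm{z}$ and organize the cases around it, whereas the paper defines $p_j=\max\{0,\min\{v,b_{j+1}\}-b_j\}$ directly in terms of $v$ and splits according to the position of $v$ relative to the breakpoints (including the boundary cases $v=b_j$); these are the same construction once the link between $v$ and $\bm{z}$ is used.
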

\begin{proof}
Let $(v,\bm{z},s)\in X$. Define variables $(p_0,\bm{p})\in \R_+^\ell$ as $p_0=\max\{0,b_1-v\}$ and $p_j=\max\{0,\min\{v,b_{j+1}\}-b_j\}$. We now check the validity of \eqref{eq:validX} by cases.

\noindent $\bullet$ \textbf{Case  $\bm{v<b_1}$:} In this case, $p_0=b_1-v$ and $\bm{p}=\bm{0}$, thus constraint \eqref{eq:validX_v} is trivially satisfied. Moreover, $\bm{z}=\bm{0}$ and constraints \eqref{eq:validX_z} are also satisfied as all terms vanish. Note that, by \eqref{eq:persp.at.zero}, we have $0\mathcal{L}\left(b_j + 0/0\right)= 0$. Thus, inequality \eqref{eq:validX_L} reduces to $s\geq \mathcal{L}\left(b_1-\frac{p_{0}}{1}\right) + \sum_{j=1}^{\ell-1}0\mathcal{L}\left(b_j + 0/0\right) = \mathcal{L}\left(b_1-\frac{p_{0}}{1}\right) =\mathcal{L}\left(v\right)$, which is satisfied.\\
\noindent $\bullet$ \textbf{Case  $\bm{b_j<v<b_{j+1}}$:} In this case, $p_0=0$, $p_k=b_{k+1}-b_k$ for $k<j$, $p_j=v-b_j$ and $p_k=0$ for $k>j$, thus the telescoping sum \eqref{eq:validX_v} is satisfied. Moreover, for $k<j$, constraint \eqref{eq:validX_z} reduces (after division by $p_k=b_{k+1}-b_k$) to $z_{k+1}\leq 1\leq z_k\Leftrightarrow z_k=1$, which is implied by $(1-z_k)(b_k-v)\geq 0$; for $k>j$, constraint \eqref{eq:validX_z} reduces to $z_{k+1}\leq 0$, which is implied by $z_{k+1}(v-b_{k+1})\geq 0$; and for $k=j$, inequality $(b_{j+1}-b_j)z_{j+1}\leq p_j\leq (b_{j+1}-b_j)z_j$ is satisfied since 
both $z_{j+1}=0$ and $z_j=1$. 
Finally, \eqref{eq:validX_L} reduces to $s\geq \mathcal{L}\left(b_j + \frac{p_{j} - z_{j+1}(b_{j+1}-b_j)}{z_{j} - z_{j+1}}\right) + \sum_{k=1, k\neq j}^{\ell}0\mathcal{L}(b_k + 0/0) =\mathcal{L}(b_j+p_j)=\mathcal{L}(v)$, which is satisfied.  

\noindent $\bullet$ \textbf{Case  $\bm{v=b_{j}}$ for some $\bm{j\geq 2}$:} In this case, $p_0=0$, $p_k=b_{k+1}-b_k$ for $k<j$ and $p_j=0$ for $k\geq j$, thus the telescoping sum \eqref{eq:validX_v} is satisfied. The cases for verification of \eqref{eq:validX_z} for $k<j$ and $k>j$ are identical to the previous case, and are satisfied since $z_k=1$ for $k<j$ and $z_k=0$ for $k>j$; for $k=j$, inequality \eqref{eq:validX_z} is $(b_{j+1}-b_j)z_{j+1}\leq 0\leq (b_{j+1}-b_j)z_j$, forcing $z_{j+1}=0$ but is satisfied for any $z_j\in \{0,1\}$. 
Finally,  if $z_j=1$ then \eqref{eq:validX_L} reduces to $s\geq \mathcal{L}\left(b_j + \frac{p_{j} - z_{j+1}(b_{j+1}-b_j)}{z_{j} - z_{j+1}}\right) + \sum_{k=1, k\neq j}^{\ell}0\mathcal{L}(b_k + 0/0) =\mathcal{L}(b_j+p_j)=\mathcal{L}(b_j)=\mathcal{L}(v)$, which is satisfied; and if $z_j=0$, then \eqref{eq:validX_L} reduces to $s\geq \mathcal{L}\left(b_{j-1} + \frac{p_{j-1} - z_{j}(b_{j}-b_{j-1})}{z_{j-1} - z_{j}}\right)=\mathcal{L}(b_{j-1}+p_{j-1})  + \sum_{k=1, k\neq j-1}^{\ell}0\mathcal{L}(b_k + 0/0) =\mathcal{L}(b_j)=\mathcal{L}(v)$, which is satisfied.  

\noindent $\bullet$ \textbf{Cases  $\bm{v>b_{\ell+1}}$ and $\bm{v=b_1}$:} These cases follow from identical arguments to the previous cases and are omitted for brevity.\end{proof} 

Applying Proposition~\ref{prop:valid} to formulation \eqref{eq:fairMipRegEpi}, we obtain the following relaxation for fair regression.

\begin{corollary}
[Conic relaxation for fair regression] \label{prop:validity}
    The formulation
    \begin{subequations} \label{mod.strong.formulation}
    \begin{align}
        \min \; & \sum_{i=1}^m s_i + \lambda t \\
        \mathrm{s. t. } & ~\coef^\top\bm{x}_i = b_1 - p_{i0} + \sum_{j=1}^\ell p_{ij}, & i \in [m], \label{strong:equality} \\
        & (b_{j+1}-b_j)z_{ij+1} \leq p_{ij} \leq (b_{j+1}-b_j)z_{ij}, & i \in [m], j \in [\ell - 1], \label{strong:bounds} \\
        &(1-z_{i1})\mathcal{L}\left(b_1-\frac{p_{i0}}{1-z_{i1}}\right)  + z_{i\ell}\mathcal{L}\left(b_\ell + \frac{p_{i\ell}}{z_{i\ell}}\right) \notag \\
        & + \sum_{j=1}^{\ell-1} \left(z_{ij} - z_{ij+1}\right)\mathcal{L}\left(b_j + \frac{p_{ij} - z_{ij+1}(b_{j+1}-b_j)}{z_{ij} - z_{ij+1}}\right) \leq s_i,  \hspace{-1cm} & i \in [n], \label{strong:epi} \\
        &\frac{1}{m_1} \sum_{i=1: a_i = 1}^m z_{ij} - \frac{1}{m}\sum_{i=1}^m z_{ij} \leq t, & j \in [\ell],\\
        & p_{i0} \ge 0, \ p_{i\ell} \ge 0, \ \bm{z}_i \in [0,1]^\ell, & i\in[m], \label{strong:z.bounds}
    \end{align}
    \end{subequations}
    is a valid relaxation of \fairreg.
\end{corollary}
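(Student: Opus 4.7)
My plan is to show that \eqref{mod.strong.formulation} is a valid relaxation by exhibiting, for every feasible solution of \fairreg, a feasible lifting into the extended variable space $(\coef,t,\bm{z},\bm{s},\bm{p})$ that attains the same objective value. The scaffolding for this is already in place: \fairreg\ admits the equivalent epigraph reformulation \eqref{eq:fairMipRegEpi}, and the non-convex set appearing there decomposes as the Cartesian product $\bar V_{\mathcal L}=X_1\times\cdots\times X_m$. Thus it suffices to lift each observation independently and then assemble the pieces.

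Concretely, I would start from an arbitrary feasible $\coef\in\R^n$ of \fairreg, and set $v_i=\coef^\top\bm{x}_i$, $z_{ij}=\mathbbm{1}(v_i>b_j)$, $s_i=\mathcal{L}(v_i,y_i)$, and $t=\max_{j\in[\ell]}\bigl\{\tfrac{1}{m_1}\sum_{i: a_i=1}z_{ij}-\tfrac{1}{m}\sum_{i}z_{ij}\bigr\}$. By construction $(v_i,\bm{z}_i,s_i)\in X_i$ for every $i\in[m]$, so Proposition~\ref{prop:valid} applied observation-wise produces auxiliary vectors $(p_{i0},\bm{p}_i)\in\R_+^{1+\ell}$ such that the inequalities \eqref{eq:validX_v}–\eqref{eq:validX_L} hold for each $i$. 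Substituting $v_i=\coef^\top\bm{x}_i$ into \eqref{eq:validX_v} yields constraint \eqref{strong:equality}; \eqref{eq:validX_z} yields \eqref{strong:bounds}; and \eqref{eq:validX_L} yields \eqref{strong:epi}. The fairness inequality and the definition of $t$ directly give the constraint on $t$, and the bounds \eqref{strong:z.bounds} are satisfied because the $z_{ij}$ are binary and thus in $[0,1]$ (this is where we pass from the integer feasible region of \fairreg\ to the continuous relaxation, which only enlarges the feasible set).

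Finally, the objective value of the lifted point equals $\sum_{i=1}^m s_i+\lambda t=\sum_{i=1}^m \mathcal{L}(\coef^\top\bm{x}_i,y_i)+\lambda\,\widehat{\mathrm{DP}}_\ell(\coef)$, which matches the \fairreg\ objective at $\coef$. Hence the optimal value of \eqref{mod.strong.formulation} is no larger than that of \fairreg, establishing the claim. I expect no real obstacle: the argument is essentially a bookkeeping exercise gluing the per-observation validity result of Proposition~\ref{prop:valid} into the global formulation. The only subtlety worth a sentence in the final write-up is making clear that the integrality of $\bm{z}$ in \fairreg\ is relaxed to $[0,1]$ in \eqref{strong:z.bounds}, which is precisely what turns \eqref{eq:fairMipRegEpi} into a convex relaxation rather than an exact reformulation.
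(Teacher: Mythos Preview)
Your proposal is correct and follows exactly the approach the paper intends: the corollary is stated immediately after the sentence ``Applying Proposition~\ref{prop:valid} to formulation \eqref{eq:fairMipRegEpi}, we obtain the following relaxation for fair regression,'' with no further proof given, and your write-up simply spells out this application observation-by-observation using the Cartesian product decomposition $\bar V_{\mathcal L}=\prod_i X_i$. The only remark is that your closing sentence slightly misstates where the relaxation occurs---\fairreg\ itself has no $\bm z$ variables, so it is the passage from the MIO reformulation \eqref{eq:fairMipRegEpi} (with $\bm z\in\{0,1\}^{m\times\ell}$) to \eqref{mod.strong.formulation} (with $\bm z_i\in[0,1]^\ell$) that constitutes the relaxation step---but the mathematical content is unaffected.
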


\begin{remark}
    To derive a mixed-integer reformulation of \fairreg\ from the relaxation proposed in Corollary~\ref{prop:validity}, it may not be sufficient to just enforce integrality on $\bm{z}$. Notice that big-$M$ constraints \eqref{bigM:ub} and  \eqref{bigM:lb} are not automatically implied in model \eqref{mod.strong.formulation} when we include integrality constraints, since $z_{i1} = 1$ does not immediately imply $p_{i0} = 0$, or $z_{i\ell} = 0$ does not immediately imply $p_{i\ell} = 0$. To handle this, we need to carefully consider the loss function $\mathcal{L}$. By \citet[p.259]{rockafellar1970convex}, one can show that if $\mathcal{L}$ grows superlinearly, the first two terms of constraint \eqref{strong:epi} imply $p_{i0}z_{i1} = 0$ and $p_{i\ell}(1-z_{i\ell}) = 0$ and thus no further considerations are required. Otherwise, additional big-$M$ or SOS1 constraints \citep{beale1970special} are needed.

    In particular, for the least squares regression with $\mathcal{L}(v,y_i)=(w-y_i)^2$, big-$M$ constraints are not required. For general non-linear loss functions that do not grow superlinearly, such as in the logistic regression setting where $\mathcal{Y} = \{-1,1\}$ and $\mathcal{L}(v,y_i)=\ln\left(1+e^{-y_iv}\right)$, we need to include constraints $p_{i0} \leq M(1-z_{i1})$ and $p_{i\ell} \leq Mz_{i\ell}$ to build a valid mixed-integer reformulation. Note that despite the use of big-$M$ constraints in such cases, we still enjoy an improved continuous relaxation compared to formulation \basicmodel. 
\end{remark}

We conclude this section with Proposition~\ref{prop:conv.hull}, which states that relaxation obtained in Corollary~\ref{prop:validity} is based on the closure of convex hull of $X.$ 

\begin{proposition} \label{prop:conv.hull}
    The set $\tilde{X} = \{(v, \bm{z}, p_0, \bm{p}, s) \in \R \times [0,1]^\ell \times \R^{\ell+1}_+ : \eqref{eq:validX_v}-\eqref{eq:validX_L}\}$ is an extended formulation for $\text{cl conv}(X)$. \ignore{, where \todo{how is \eqref{constr:conv.hull.eq}-\eqref{constr:conv.hull.epi} different from \eqref{eq:validX} besides non-negativity? Can we simply use \eqref{eq:validX} to avoid introducing redundant inequalities and numbering?}
    \begin{subequations}
        \begin{align}
            &v = b_1 + \sum_{i=1}^\ell p_i - p_0 \label{constr:conv.hull.eq}\\
            &(b_{i+1}-b_i) z_{i+1} \leq p_i \leq (b_{i+1}-b_i) z_i \quad \forall i = 1, \ldots, \ell-1 \label{constr:conv.hull.bounds}\\
            & 0 \leq p_0, 0 \leq p_\ell \label{constr:conv.hull.non.neg} \\
            &(1-z_1)\mathcal{L}\left(b_1 - \frac{p_0}{1-z_1}\right) + \sum_{i=1}^{\ell-1} (z_i - z_{i+1}) \mathcal{L}\left(b_i + \frac{p_i - z_{i+1}(b_{i+1}-b_i)}{z_i-z_{i+1}}\right) + z_\ell \mathcal{L}\left(b_\ell + \frac{p_\ell}{z_\ell}\right) \leq s \label{constr:conv.hull.epi}.
        \end{align}
    \end{subequations}}
\end{proposition}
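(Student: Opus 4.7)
The plan is to establish both inclusions between $\mathrm{proj}_{v,\bm{z},s}(\tilde{X})$ and $\mathrm{cl\,conv}(X)$. The easy direction $\mathrm{cl\,conv}(X) \subseteq \mathrm{proj}_{v,\bm{z},s}(\tilde{X})$ is essentially a restatement of Proposition~\ref{prop:valid}: that result exhibits, for each $(v,\bm{z},s) \in X$, an extension $(p_0,\bm{p}) \in \R_+^{\ell+1}$ with $(v,\bm{z},p_0,\bm{p},s) \in \tilde{X}$. Because \eqref{eq:validX_L} is a sum of closed perspectives of the convex function $\mathcal{L}$, the set $\tilde{X}$ is closed and convex, so taking closed convex hulls on both sides of $X \subseteq \mathrm{proj}_{v,\bm{z},s}(\tilde{X})$ yields the desired inclusion.

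For the reverse direction, I would exhibit, for every $(v,\bm{z},p_0,\bm{p},s) \in \tilde{X}$, an explicit convex decomposition into $\ell + 1$ points of $X$. Define the weights $\mu_0 = 1 - z_1$, $\mu_j = z_j - z_{j+1}$ for $j \in [\ell-1]$, and $\mu_\ell = z_\ell$; they are non-negative because \eqref{eq:validX_z} implies $z_{j+1} \leq z_j$, and they telescope to $\sum_{j=0}^\ell \mu_j = 1$. The associated decomposition points are $P^{(j)} = (v^{(j)}, \bm{z}^{(j)}, \mathcal{L}(v^{(j)}))$ with $\bm{z}^{(j)} = \sum_{k=1}^j \bm{e_k}$ and
$$v^{(0)} = b_1 - \frac{p_0}{1-z_1}, \quad v^{(j)} = b_j + \frac{p_j - z_{j+1}(b_{j+1}-b_j)}{z_j - z_{j+1}}, \quad v^{(\ell)} = b_\ell + \frac{p_\ell}{z_\ell},$$
mirroring the arguments of the perspective terms in \eqref{eq:validX_L}. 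The bounds in \eqref{eq:validX_z} guarantee $v^{(j)} \in [b_j, b_{j+1}]$ (with one-sided analogues at the endpoints), which together with the chosen $\bm{z}^{(j)}$ shows $P^{(j)} \in X$ for each $j$.

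The verification that $\sum_j \mu_j P^{(j)}$ recovers $(v, \bm{z}, s)$ (up to raising $s$) proceeds in three parts. Matching the $v$-coordinate is a telescoping identity in $\mu_j v^{(j)}$ that reduces to \eqref{eq:validX_v}; matching the $\bm{z}$-coordinate relies on $\sum_{j \geq k} \mu_j = z_k$; and the inequality $\sum_j \mu_j \mathcal{L}(v^{(j)}) \leq s$ is precisely constraint \eqref{eq:validX_L}. Because $X$ is monotone in $s$, any slack can be absorbed by uniformly raising the $s$-coordinate of each $P^{(j)}$, yielding $(v,\bm{z},s) \in \mathrm{conv}(X)$.

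The main obstacle is handling the degenerate cases in which some weight $\mu_j$ vanishes. For interior indices $j \in [\ell-1]$ with $\mu_j = 0$, the bounds \eqref{eq:validX_z} force $p_j = z_{j+1}(b_{j+1}-b_j)$, so the corresponding perspective term evaluates to $0$ through \eqref{eq:persp.at.zero} and the point $P^{(j)}$ can be dropped from the decomposition. The endpoint cases $\mu_0 = 0$ or $\mu_\ell = 0$ are more delicate when $\mathcal{L}$ lacks superlinear growth (as in the logistic loss), since then $p_0$ or $p_\ell$ need not vanish and the closed perspective of $\mathcal{L}$ at $0$ is governed by its recession function. In such cases I would perturb $z_1$ or $z_\ell$ slightly into the interior, obtain a genuine convex combination for each perturbation, and pass to the limit, invoking the closure operator in $\mathrm{cl\,conv}(X)$ to conclude.
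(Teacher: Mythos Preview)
Your proposal is correct and follows essentially the same route as the paper. The paper writes $X=\bigcup_{j=0}^{\ell}X^j$ with $X^j=\{(v,\bm 1_j,s):b_j\le v\le b_{j+1},\ \mathcal L(v)\le s\}$, applies the standard disjunctive characterization of $\mathrm{conv}(X)$ with multipliers $\alpha_j$, identifies $\alpha_0=1-z_1$, $\alpha_j=z_j-z_{j+1}$, $\alpha_\ell=z_\ell$, and then changes variables to recover $\tilde X$; your weights $\mu_j$ and points $P^{(j)}$ are exactly these $\alpha_j$ and $(v^j,\bm 1_j,s^j)$, so the two arguments coincide. If anything, your explicit handling of the degenerate cases $\mu_j=0$ via \eqref{eq:persp.at.zero} and a perturbation at the endpoints is more careful than the paper's final change of variables, which is silently ill-defined precisely when some $z_j-z_{j+1}=0$.
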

\begin{proof} Let $\bm{1}_j = \sum_{k=1}^j \bm{e}_k$ be a vector where the first $j$ elements are set to one, with the rest being zero. We write $X$ as a disjunction of $\ell+1$ sets $X = \bigcup_{i=0}^{\ell} X^j,$ where 
\begin{itemize}
    \item $X^0 = \{(v^0, \bm{0}, s^0): v^0 \leq b_1, ~\mathcal{L}(v^0) \leq s^0\}$,
    \item $X^j = \{(v^j, \bm{1}_j, s^j): b_j \leq v^j \leq b_{j+1}, ~\mathcal{L}(x^j) \leq s^j \}, \; j = 1, \ldots, \ell-1$,
    \item $X^\ell = \{(v^\ell, \bm{1}, s^\ell): b_\ell \leq v^\ell, ~\mathcal{L}(v^\ell) \leq s^\ell\}$.
\end{itemize}

\noindent Then $(v, \bm{z}, s) \in \mathrm{conv}(X)$ if and only if the following system is feasible
\begin{subequations}
    \begin{align}
    &v = \sum_{j=0}^\ell \alpha_j v^j, ~s = \sum_{j=0}^\ell \alpha_j s^j,\\
    &\bm{z} = \sum_{j=1}^\ell \alpha_j \bm{1}_j, ~ \bm{1}^\top \bm{\alpha} = 1, ~ \bm{\alpha} \geq 0 \label{eq:lambda} \\
    &v^0 \leq b_1, v^\ell \geq b_\ell,\\
    &b_j \leq v^j \leq b_{j+1}, \quad ~j \in [\ell-1], \\
    &\mathcal{L}(v^j) \leq s^j, \quad \quad  \quad j = 0, \ldots, \ell.
    \end{align}
\end{subequations}
We project out $\bm{\alpha}$ as constraints \eqref{eq:lambda} imply $\alpha_0 = 1-z_1$, $\alpha_j = z_j - z_{j+1}, ~j = 1, \ldots, \ell-1$ and $\alpha_\ell = z_\ell$ and obtain the equivalent system
\begin{subequations}
    \begin{align}
    &v = (1-z_1)v^0 + \sum_{j=1}^\ell (z_j-z_{j+1}) v^i + z_\ell v^\ell,\\
    & (1-z_1)\mathcal{L}(v^0) + \sum_{j=1}^{\ell-1} (z_j - z_{j+1}) \mathcal{L}(v^j) + z_\ell \mathcal{L}(v^\ell) \leq s, \label{eq.delta.ell} \\
    &v^0 \leq b_1, v^\ell \geq b_\ell,\\
    &b_j \leq v^j \leq b_{j+1}, ~ j \in [\ell-1], \\
    & \bm{z} \in [0,1]^\ell.
    \end{align}
\end{subequations}
Introducing $p_0 = b_1-v^0, p^j = v^j - b_j, ~j \in [\ell],$ equivalent to shifted $v^i$, we rewrite the system 
\begin{subequations}
    \begin{align}
    &v = b_1 - (1-z_1)p^0 + \sum_{i=1}^{\ell-1}\left[(z_j-z_{j+1})p^j + z_{j+1}(b_{j+1}-b_j) \right] + z_\ell p^\ell, \\
    & (1-z_1)\mathcal{L}(b_1-p^0) + \sum_{j=1}^{\ell-1} (z_j - z_{i+j}) \mathcal{L}(b_j + p^j) + z_\ell \mathcal{L}(b_\ell + p^\ell) \leq s,\\
    &0 \leq p^0, 0 \leq p^\ell\\,
    &0 \leq p^j \leq b_{j+1}-b_j, ~j \in [\ell-1], \\
    & \bm{z} \in [0,1]^\ell.
    \end{align}
\end{subequations}
A final change of variables $p_0 = (1-z_1)p^0$, $p_\ell = z_\ell p^\ell$, and $p_j = (z_j-z_{j+1})p^j + z_{j+1}(b_{j+1}-b_j)$ recovers the set $\tilde{X}$.
\end{proof}

Proposition~\ref{prop:conv.hull} implies that the strong convex relaxation \eqref{mod.strong.formulation} is exact for the single-observation fair training problem that arises when $m=1$. Moreover, it is also exact for the single-factor problem when $n=1$, which directly follows from the formulation of \fairreg. To see that single-factor fair regression training reduces to optimization over a single set of type $X$, assuming $x_i \neq 0, ~i \in [m]$, we can formulate the problem as follows
\begin{subequations}
    \begin{align}
    \min_{w, \bm{z}} \; & s + \lambda t\\
    \text{s.t. } &\frac{1}{m_1}\sum_{i=1:a_i=1}^m z_j - \frac{1}{m}\sum_{i=1}^m \leq t, \hspace{3.15cm}j \in [\ell], \label{eq:single-factor-fairness}\\
    & \left(w-b_j/x_i\right)z_{ij} \geq 0, (b_j/x_i-w)(1-z_{ij}) \geq 0, \hspace{0.5cm}i \in [m], j \in [\ell],\\
    & \sum_{i=1}^m \mathcal{L}(x_iw) \leq s\\
    & w \in \R, \bm{z} \in \{0,1\}^{m\times\ell} \label{eq:single-factor-variables}.
    \end{align}
\end{subequations}
 Constraints \eqref{eq:single-factor-fairness}-\eqref{eq:single-factor-variables} can be represented by a single set of the form of $X$. 

\section{Coordinate descent algorithm} \label{sec:coordinate.descent}
To complement the relaxations and mixed-integer optimization formulations introduced in \S\ref{sec:formulations}, we propose a coordinate descent method tailored for the regularized fair regression problem that improves upon a given initial solution. Coordinate descent methods iteratively update one coordinate at a time by minimizing the objective function of the problem with respect to that coordinate while keeping all others fixed. The proposed methods can be used to tackle either \eqref{eq:fairContinuousRegAbs} or \eqref{eq:fairContinuousReg} (with simple modifications); here, we present the discussion for problem \eqref{eq:fairContinuousRegAbs}.

Suppose after $t$ iterations, we are given a model with regression coefficients $\coef^t$. At iteration $t+1$, coordinate $k$ is updated by solving the following problem with all other coordinates $p\neq k$ remaining fixed:
\begin{equation}\label{cd.update}
        w^{t+1}_k \in \arg\min_{w_k\in\R}  F(w_k) :=\sum_{t=1}^m \mathcal{L}(w_kx_{ik} + \sum_{p \neq k} w_p^tx_{ip}) +\lambda R(w_k),
\end{equation}
where $R(w_k)$ is the fairness regularizer
\begin{subequations} \label{eq:R(w)}
    \begin{align}
        R(w_k) = \min_{\bm{z}} & \max_{j \in [\ell]}\left|\frac{1}{m_1}\sum_{i=1: a_i=1}^m z_{ij}-\frac{1}{m}\sum_{i=1}^mz_{ij}\right| \\
        \text{s.t. }& (w_kx_{ik} + \sum_{p \neq k} w_p^tx_{ip} - b_j)z_{ij}\geq 0, & i \in [m],  j \in [\ell] ,\label{constr:cd.nonneg}\\
        & (w_kx_{ik} + \sum_{p \neq k} w_p^tx_{ip} - b_j)(1-z_{ij}) \leq 0, & i \in [m], j \in [\ell], \label{constr:cd.nonpos}\\
        &\bm{z}_{ij} \in \{0,1\}^{m\times\ell}.
    \end{align}
\end{subequations}
To solve \eqref{cd.update} efficiently, we use Proposition~\ref{prop:cd} which states that the solution $w_k^{t+1}$ is found by either ignoring fairness, or is equal to $\tilde{b}_{ij} = (b_j- \sum_{p \neq k} w_p^tx_{ip} )/x_{ij}$ for some $i\in[m], j\in[\ell]$. We then obtain a simple coordinate descent algorithm: given a solution $\coef^t$ and coordinate direction $k$, first compute $\tilde{b}_{ij}, i \in[m], j \in [\ell]$, and set  $w_k^{t+1}$ to the best solution among these points and the unconstrained solution that ignores fairness.
\begin{proposition}\label{prop:cd} 
    Let $w_{\mathrm{nofair}} = \arg\min_{w_k}\sum_{i=1}^m \mathcal{L}\left(w_kx_{ik} + \sum_{p \neq k} w_p^tx_{ip}\right)$ be the minimizer of the loss function along direction $k$ with no fairness regularization. Then, there exists a minimizer $w^{t+1}_k$ to \eqref{cd.update} such that:
    \begin{equation*}
        w^{t+1} = \arg\min_{w_k \in \tilde{B} \cup \{w_\mathrm{nofair}\}} F(w_k),
    \end{equation*}
    where $\tilde{B} = \left\{\tilde{b}_{ij} =\left(b_j - \sum_{p \neq k} w_p^tx_{ip} \right)/x_{ik}: i  \in [m], ~ j \in [\ell], x_{ij} \neq 0 \right\}$.
\end{proposition}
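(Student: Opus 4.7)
The plan is to exploit the piecewise structure of $R(w_k)$ as a function of $w_k$. First I would observe that, with all other coordinates fixed, the regularizer $R(w_k)$ depends on $w_k$ only through the sign of each quantity $w_kx_{ik}+\sum_{p\neq k}w_p^t x_{ip}-b_j$. Writing $c_i=\sum_{p\neq k}w_p^tx_{ip}$, the indicator constraints \eqref{constr:cd.nonneg}--\eqref{constr:cd.nonpos} force $z_{ij}=\mathbbm{1}(w_kx_{ik}+c_i>b_j)$ whenever $w_kx_{ik}+c_i\neq b_j$, and at the equality $w_k=\tilde{b}_{ij}$ either value of $z_{ij}$ is admissible. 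Hence $R(w_k)$ is piecewise constant in $w_k$, with at most $|\tilde{B}|$ jump points located precisely at the finite set $\tilde{B}$.

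Next I would partition $\R$ into the open intervals determined by the sorted distinct values in $\tilde{B}$, say $\tilde{b}_{(1)}<\tilde{b}_{(2)}<\cdots<\tilde{b}_{(K)}$. On each open interval $I_k=(\tilde{b}_{(k)},\tilde{b}_{(k+1)})$ (and on the two unbounded end intervals), $R$ equals some constant $R_k$, so
\[
F(w_k)=\sum_{i=1}^m\mathcal{L}\Bigl(w_kx_{ik}+c_i\Bigr)+\lambda R_k\qquad\text{on }I_k,
\]
which is convex in $w_k$. Therefore the minimum of $F$ on the closure $\overline{I_k}$ is attained either at a stationary point of $\sum_i\mathcal{L}(\cdot)$ lying in $I_k$ (which by definition is $w_{\mathrm{nofair}}$) or at an endpoint of $\overline{I_k}$, which is a breakpoint in $\tilde{B}$.

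Then I would check that considering only the open-interval values $R_k$ does not lose anything at the breakpoints. At a breakpoint $\tilde{b}\in\tilde{B}$, both $z_{ij}=0$ and $z_{ij}=1$ are feasible for each constraint that is active at $\tilde{b}$, so $R(\tilde{b})\le\min\{R_{k-1},R_k\}$ for the two adjacent intervals. Consequently $F(\tilde{b})\le\lim_{w_k\to\tilde{b}}F(w_k)$ approached from either side, so the infimum of $F$ on $\overline{I_k}$ is actually attained (at $w_{\mathrm{nofair}}$ or at a breakpoint). Taking the minimum of these attained values over the finitely many intervals yields a global minimizer lying in $\tilde{B}\cup\{w_{\mathrm{nofair}}\}$, as claimed.

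The main obstacle I anticipate is handling the boundary behavior carefully: $R$ is only lower semicontinuous, not continuous, because of the min over $\bm z$ at ties, and the convex loss on an unbounded end interval may not attain its infimum. The remedy is the observation in the previous paragraph that $R(\tilde{b})$ is dominated by both neighboring plateau values, together with noting that if $w_{\mathrm{nofair}}$ fails to exist (unbounded direction), the regression term alone has no finite minimizer along $k$ and one may restrict attention to $\tilde B$ without loss. Existence of a minimizer then follows from lower semicontinuity and a compactness argument on the finitely many intervals.
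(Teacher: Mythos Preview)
Your proposal is correct and follows essentially the same argument as the paper: partition $\R$ by the sorted breakpoints in $\tilde{B}$, observe that $R$ is constant on each open piece and drops (weakly) at the breakpoints by the extra freedom in $\bm z$, then use convexity of the loss to localize the minimizer on each closed subinterval to an endpoint or $w_{\mathrm{nofair}}$. Your added remarks on lower semicontinuity and the unbounded end intervals are more careful than the paper's version but not needed once $w_{\mathrm{nofair}}$ is assumed to exist as in the statement.
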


\begin{proof}
Let $\tilde{b}_{[s]}$ be the $s$-th smallest element of $\tilde{B}$ and define a partition of the real line, $\tilde{b}_{[1]}  <  \tilde{b}_{[2]} < \ldots <\tilde{b}_{[|\tilde{B}|]}$. The minimizer of \eqref{cd.update} must also be a minimizer for one of the finite sub-intervals defined by the partition. Let $w^*_{ks}$ by the best solution over sub-interval $\left[\tilde{b}_{[s]},\tilde{b}_{[s+1]}\right]$:
\begin{equation}
    w^*_{ks} = \arg \min_{w_k \ \in \left[\tilde{b}_{[s]}, \tilde{b}_{[s+1]}\right]} F(w_k). \label{opt.single.sub.interval}
\end{equation}
Then solving \eqref{cd.update} is equivalent to 
\begin{equation}
    w^{t+1}_k = \arg\min_{w^*_{ks}: s \in [|\tilde{B}|]} F(w^*_{ks}). \label{opt.all.sub.intervals}
\end{equation}
To solve \eqref{opt.single.sub.interval}, we first rewrite the constraints \eqref{constr:cd.nonneg} and \eqref{constr:cd.nonpos} in terms of $\tilde{b}_{ij}$:
\begin{subequations}
    \begin{align}
        (w_k-\tilde{b}_{ij})x_{ik}z_{ij} &\geq 0,  ~i\in[m],j\in[\ell], \label{cd.proof:nonneg}\\
        (w_k-\tilde{b}_{ij})x_{ik}(1-z_{ij}) &\leq 0, ~i\in[m],j\in[\ell].\label{cd.proof:nonpos}
    \end{align}
\end{subequations}
Given $w_k \in \left(\tilde{b}_{[s]}, \tilde{b}_{[s+1]}\right)$, the sign of $(w_k - \tilde{b}_{ij})$ is constant and nonzero and thus by constraints \eqref{cd.proof:nonneg} and \eqref{cd.proof:nonpos}, there are unique values $z_{ij}\in \{0,1\}$ feasible for \eqref{eq:R(w)}. Therefore, $R(w_k)$ is equal to some constant $c_s$ in the interior of each sub-interval. A similar argument is made for the open intervals $(-\infty, \tilde{b}_{[1]})$ and $(\tilde{b}_{|\tilde{B}|}, \infty)$.

Now consider $R(w_k)$ evaluated at a boundary points $\tilde{b}_{[s]}$. When $w_k  = \tilde{b}_{[s]}$, $\bm{z}$ is no longer completely determined by $w_k$ as $z_{ij}$ associated with $\tilde{b}_{[s]}$ is free to be zero or one. The additional freedom in $z_{ij}$ implies that $R(\tilde{b}_{[s]}) \leq c_s$. Similarly, we have that $R(\tilde{b}_{[s+1]}) \leq c_s$, implying that the minimum of $R$ on the sub-interval is on the boundary points. Finally, by convexity, the minimum of $\mathcal{L}$ on sub-interval $\left[\tilde{b}_{[s]}, \tilde{b}_{[s+1]}\right]$ must be at one of the boundary points, or $w_\mathrm{nofair}$ if the global minimizer of $\mathcal{L}$ is in the sub-interval. Thus, we have $$\min_{w_k \ \in \left[\tilde{b}_{[s]}, \tilde{b}_{[s+1]}\right]} F(w_k) =\begin{cases}
    \min\left(F\left(\tilde{b}_{[s]}\right), F\left(\tilde{b}_{[s+1]}\right)\right), & \text{if } w_{\mathrm{nofair}} \not\in \left(\tilde{b}_{[s]}, \tilde{b}_{[s+1]}\right)\\
    \min\left(F\left(\tilde{b}_{[s]}\right), F\left(\tilde{b}_{[s+1]}\right), F\left(w_\mathrm{nofair} \right)\right), & \text{else.}
\end{cases}$$
Therefore, it suffices to evaluate $F$ at each boundary point $\tilde{b}$, as well as $w_{\mathrm{nofair}}$ to solve \eqref{opt.all.sub.intervals}.
\end{proof}
\begin{algorithm}[h]
\caption{Coordinate descent for fair regression.}\label{alg:cd}
\begin{algorithmic}
\Require $\{\bm{x}_i, y_i, a_i\}_{i=1}^m$ data; $\{b_j\}_{j=1}^\ell$ breakpoints; $\lambda$  regularization strength; $\coef^0$ initial solution;
\Ensure $\coef^\mathrm{CD}$ coordinate-wise local optimal solution
\State $t \leftarrow 0$
\While{$\coef^t$ is not coordinate-wise locally optimal}
\State Select coordinate $k \in [n]$
\State $\tilde{B} \leftarrow \left\{ (b_{ij} - \sum_{p=1,p\neq k}^n w^t_{p} x_{ip})/x_{ik}: i \in [n], j \in [\ell], x_{ij} \neq 0 \right\}$
\State $w_{\mathrm{nofair}} \leftarrow \arg\min_{w_k}\sum_{i=1}^m \mathcal{L}\left(w_kx_{ik} + \sum_{j=1:j\neq p}^n w^t_p x_{ip} \right)$
\State $w^{t+1}_k \leftarrow \arg\min_{w_k \in \tilde{B} \cup \{w_{\mathrm{nofair}}\}} F(w_k)$
\State $t \leftarrow t + 1$
\EndWhile
\State $\coef^{\mathrm{CD}} \leftarrow \coef^t$
\end{algorithmic}
\end{algorithm} 
\noindent Each iteration of Algorithm~\ref{alg:cd} requires one oracle call to obtain the solution to the univariate vanilla regression problem to obtain $\bm{w}_{\mathrm{nofair}}$, and $\mathcal{O}(m \times \ell)$ operations to evaluate $F$ at every point in $\tilde{B}$. 



\section{Algorithms, comparisons and extensions}\label{sec:comparisons}

Using relaxations, mixed-integer optimization, and coordinate descent methods discussed in \S\ref{sec:formulations} and \ref{sec:coordinate.descent}, we propose three methodologies that balance solution quality and run time. First, in \S\ref{sec:algOverview}, we present an overview of these three approaches. Next, in \S\ref{sec:fairClassification}, we discuss the special case with $\ell=1$, which has received more attention in the literature, and compare it with existing approaches. Finally, in \S\ref{sec:extensions}, we discuss additional extensions of the proposed methods.

\subsection{Overview of proposed algorithms}\label{sec:algOverview}
The choice of the appropriate method depends on the size of the training data and the granularity of the discretization. All three methods leverage the convex hull of $X$, as presented in Proposition~\ref{prop:conv.hull}, as a fundamental tool. Below, we summarize the methods in order of increasing computational cost.

\subsubsection{Convex relaxation}\label{sec:algConvexRelax} This approach calls for solving the convex relaxation \eqref{mod.strong.formulation} to optimality and directly using regression coefficients $\bm{w^*}$ as the estimator. Since this method only requires solving a single convex relaxation, it can be efficiently handled using off-the-shelf solvers (e.g., solving instances with $m=2,000$ observations in about five seconds), and it is particularly effective at promoting fairness. In our experiments with real data presented in \S\ref{sec:computationsReal}, we demonstrate that it is an order of magnitude faster than alternative methods in the literature, while producing estimators with comparable or superior quality. 

\subsubsection{Convex relaxation + coordinate descent}\label{sec:algCoordinateDescent} 
While the convex relaxation methods discussed in \S\ref{sec:algConvexRelax} are effective in promoting fairness, they may struggle to produce estimators $\bm{w}$ satisfying statistical parity, that is, where $\widehat{\mathrm{DP}}(\coef)\approx 0$ as the convex relaxation tends to underestimate  $\widehat{\mathrm{DP}}$. In such cases, coordinate descent methods (Algorithm \ref{alg:cd}) may offer promising alternative. Although computationally more expensive than solving a single convex relaxation, coordinate descent methods scale efficiently
to problems with thousands of data points and exactly evaluate the value of $\widehat{\text{DP}}$ in the objective. However, the quality of the estimator depends on the initial point. In our experiments in \S\ref{sec:computationsCoordinate}, we show that initializing coordinate descent with the optimal solution of the strong convex relaxation \eqref{mod.strong.formulation} produces better in-sample estimators than using other natural starting points. Moreover, in our experiments with real data in \S\ref{sec:computationsReal}, we find that coordinate descent methods initialized with the solution of the convex relaxation tend to result in better estimators out-of-sample as well. Coordinate descent is able to produce models with very high levels of fairness, which the standalone convex relaxation may fail to accomplish. Given that the computational cost of solving the convex relaxation \eqref{mod.strong.formulation} is negligible relative to the overall cost of coordinate descent, we recommend solving the relaxation first to provide a strong starting point. 

\subsubsection{Mixed-integer optimization}\label{sec:algMIP} 
The third and the most computationally intensive approach is solving a mixed-integer optimization problem by adding integrality constraints and potentially big-$M$ constraints to formulation \eqref{mod.strong.formulation}. Our experiments in \S\ref{sec:computationsMIO} show that this method is much faster and effective in reducing the optimality gaps when compared with mixed-integer optimization methods based on natural big-$M$ formulation \basicmodel. Moreover, in experiments with small synthetic instances, we observe that exact methods produce better estimators (both in-sample and out-of-sample) than alternatives based on relaxations or coordinate descent, particularly in settings where fairness is more important than accuracy. However, despite the advancement presented in this paper, solving instances with $m>100$ remains a daunting task. The faster approaches discussed in \S\ref{sec:algConvexRelax} and \S\ref{sec:algCoordinateDescent} are currently more practical and preferable for large instances.

\subsection{Fair classification metrics}\label{sec:fairClassification}

\label{para:convex.approx} Most techniques proposed for training fair models have been focused on optimizing under constraints defined by fair \emph{classification} metrics, that is, $\widehat{\mathrm{DP}}_\ell$ with $\ell=1$. In this context, empirical distance from demographic parity simplifies to 
\begin{equation}\label{binary.class.fairness}
     \widehat{\mathrm{DP}}_1= \left | \frac{1}{m_1}\sum_{i=1:a_i=1}^m\mathbbm{1}(\coef^\top\bm{x}_i > 0)  - \frac{1}{m_0}\sum_{i=1:a_i=0}^m \mathbbm{1}(\coef^\top\bm{x}_i > 0)  \right |,
\end{equation}
where we assume $b_1=0$ for simplicity. 
In this section, we compare the proposed strong extended relaxation with convex proxy methods from the literature designed for $\widehat{\mathrm{DP}}_1$.

\subsubsection{Comparison with the literature}\label{sec:fairClassificationLiterature}
A popular approach in the literature to incorporate $\widehat{\mathrm{DP}}_1$ into a fair regression problem is to design a convex approximation of the indicator function used in \eqref{binary.class.fairness} and obtain a model by solving a proxy convex optimization problem.
\citet{zafar2017fairnessB} consider approximating  $\mathbbm{1}(\coef^\top\bm{x_i} > 0)$ with the linear term itself, $\coef^\top\bm{x_i}$, giving rise to the following approximation of $\widehat{\mathrm{DP}}_1$ \eqref{binary.class.fairness}:

\begin{equation*}\label{dp.approx.linear}
\widehat{\mathrm{DP}}_1^\mathrm{linear} = \left | \frac{1}{m_0}\sum_{i=1:a_i=0}^m \coef^\top \bm{x}_i - \frac{1}{m_1}\sum_{i=1:a_i=1}^m \coef^\top \bm{x}_i  \right | \cdot
\end{equation*}
As pointed out by \citep{olfat2018spectral}, this approximation is closely related to the relaxation of the formulation \basicmodel. Consider the system of inequalities:
\begin{subequations}\label{system.zafar}
    \begin{align} 
        &-M(1-z_i) \leq \coef^\top\bm{x}_i \leq Mz_i, \; i \in [m],\\
        & \left|\frac{1}{m_1}\sum_{i=1:a_i=1}^mz_i - \frac{1}{m_0}\sum_{i=1:a_i=0}^m z_i\right| \leq t.
    \end{align}
    \end{subequations}
Rearranging the big-$M$ constraints as $\frac{1}{M}\coef^\top\bm{x}_i \leq z_i \leq \frac{1}{M}\coef^\top\bm{x}_i + 1$, we see that the above system of inequalities simply reduces to the following condition:
$$ \left|\frac{1}{m_1}\sum_{i=1:a_i=1}^m \coef^\top \bm{x}_i - \frac{1}{m_0}\sum_{i=1: a_i=0}^m \coef^\top \bm{x}_i\right|  \leq \tilde{t},$$
where $\tilde{t} = M(t-1)$ and the approximation of \citet{zafar2017fairnessB} is recovered.

The convex approximation proposed by \cite{wu2019convexity} uses convex and concave surrogate functions to approximate the indicators for both sides of the absolute function separately. 
Given $\kappa(v)$  and $\delta(v)$, convex surrogates for the indicator functions $\mathbbm{1}(v>0)$ and $-\mathbbm{1}(v>0)$ respectively, the non-linear convex approximation of $\widehat{\mathrm{DP}}_1$ \eqref{binary.class.fairness} is given by:
\begin{align*}
    \widehat{\mathrm{DP}}_1^\mathrm{convex} = \min\; t  \ 
    \text{ s.t. } & \frac{1}{m_1}\sum_{i=1: a_i=1}^m \kappa(\coef^\top\bm{x}_i) + \frac{1}{m_0}\sum_{i=1: a_1 = 0}^m \kappa(-\coef^\top\bm{x}_i) - 1 \leq t, \\
    & \frac{1}{m_1}\sum_{i=1: a_i=1}^m \delta(\coef^\top\bm{x}_i) + \frac{1}{m_0}\sum_{i=1: a_1 = 0}^m \delta(-\coef^\top\bm{x}_i) + 1 \leq t.
\end{align*} 
Hinge functions $\kappa(v) = \max(0,v+1)$ and $\delta(v)=\min(1,v)$ are commonly utilized surrogate functions \citet{wu2019convexity}. One can also connect this technique to formulation \basicmodel~by adding bound constraints $\bm{0}\leq \bm{z} \leq \bm{1}$ to the system of inequalities \eqref{system.zafar}. Combining big-$M$ constraints and variable bounds, one can rewrite the conditions on $\bm{z}$ as
$$\max(0,\frac{1}{M}\coef^\top\bm{x}_i) \leq z_i \leq \min(0, \frac{1}{M}\coef^\top\bm{x}_i) + 1.$$
By using the transformation $\max(0,x) = -\min(0,-x)$ and the above tighter lower and upper bounds reveals that the system of equations \eqref{system.zafar} with the additional bound constraints $\bm{0}\leq\bm{z}\leq\bm{1}$ is equivalent to
$$\frac{1}{m_1}\sum_{i=1: a_i=1}^m \frac{1}{M}\max(0, \coef^\top\bm{x}_i) + \frac{1}{m_0}\sum_{i=1: a_i=0}^m \frac{1}{M}\max(0,-\coef^\top\bm{x}_i) -1 \leq t.$$
Notice that we can also write the above in terms of $\min$ functions
$$-\frac{1}{m_1}\sum_{i=1: a_i=1}^m \frac{1}{M}\min(0, -\coef^\top\bm{x}_i) + \frac{1}{m_0}\sum_{i=1: a_i=0}^m \frac{1}{M}\min(0,\coef^\top\bm{x}_i) + 1\leq t.$$
The above analysis shows that continuous relaxation of \basicmodel\ is equivalent to the approximation proposed by \citet{wu2019convexity} with a shifted and scaled hinge function used as a convex surrogate for the indicator: $\kappa(v) = \frac{1}{M}\max(0, v)$ and $\delta(x) = \frac{1}{M}\min(0, -x)$.


The approximations $\widehat{\mathrm{DP}}_1^\mathrm{linear}$ and $\widehat{\mathrm{DP}}_1^\mathrm{convex}$ are similar to continuous relaxations of \basicmodel, where instead of using a large enough constant for a valid (but ultimately weak) relaxation, $M$ is shrunk and used as a hyperparameter to promote fairness. These approaches are in contrast with the approach discussed in \S\ref{sec:algConvexRelax}, which is based on provable (and stronger) relaxations. Crucially, the relaxation \eqref{mod.strong.formulation} exploits the non-linear objective, allowing for a convexification that considers fairness and accuracy simultaneously.

We illustrate this difference by considering a two-dimensional toy dataset generated according to the procedure outlined in \citet{zafar2017fairnessB}, which we detail in Appendix \ref{appendix.data}. Figure~\ref{fig:approx} shows the objective of the regularized fair logistic regression problem for different convex approximations as $w_1$ varies,
while the second coordinate is fixed, $w_2 = 0$. We visually compare the true objective with the objectives of \citet{zafar2017fairnessB}, \citet{wu2019convexity}, and the proposed convex relaxation \eqref{mod.strong.formulation} as the weight for the fairness regularization, $\lambda$, increases. For the convex approximation of \citet{wu2019convexity}, the hinge function is used as a surrogate for the indicator, as suggested in their paper.

\begin{figure}[h]
    \centering
    \includegraphics[width=0.48\textwidth]{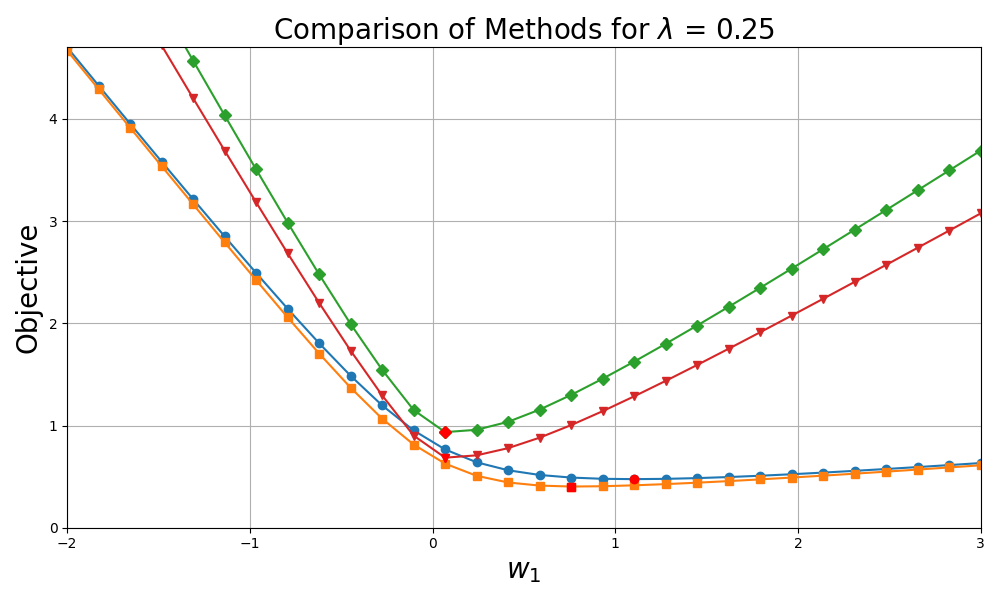}
    \hfill
    \includegraphics[width=0.48\textwidth]{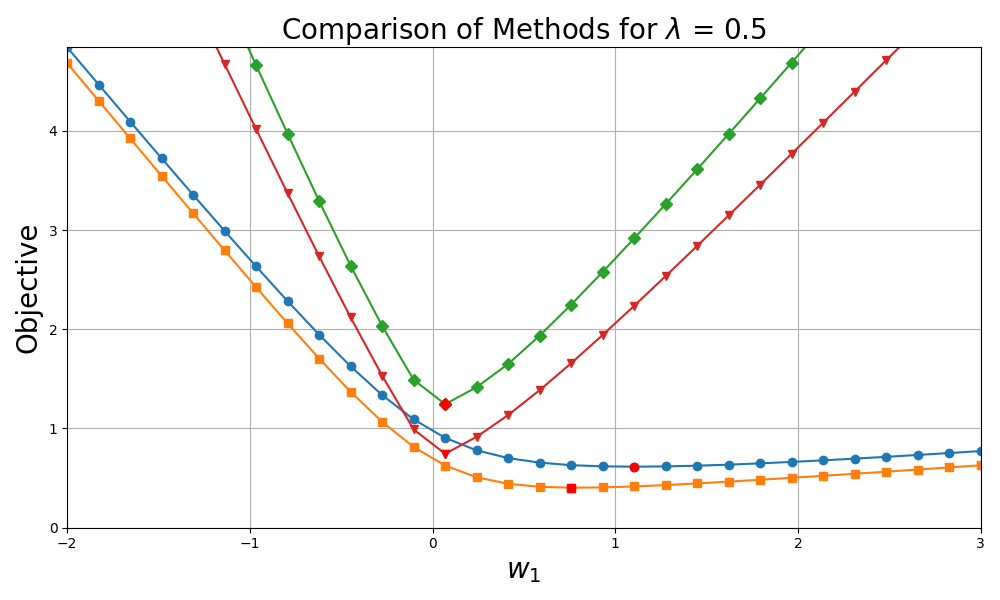}

    \includegraphics[width=0.48\textwidth]{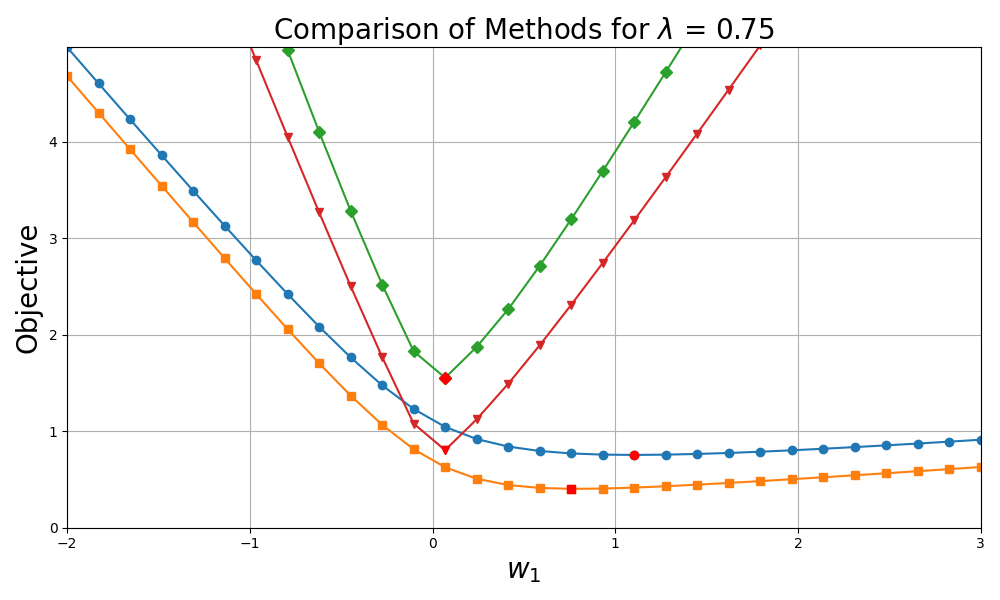}
    \hfill
    \includegraphics[width=0.48\textwidth]{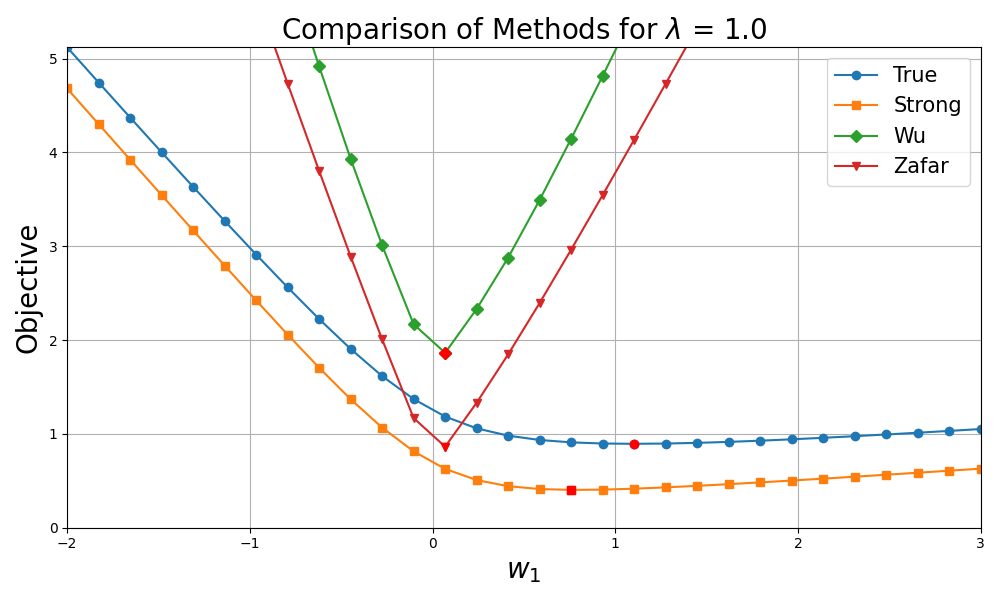}
    
    \caption{Regularized fair logistic regression objective for the true problem, strong convex relaxation~\eqref{mod.strong.formulation}, and convex approximations of \cite{zafar2017fairnessB} and \cite{wu2019convexity} for varying regularization weights ($\lambda$).}
    \label{fig:approx}
\end{figure}

 Figure~\ref{fig:approx} shows that the strong relaxation follows the shape of the true objective more closely than the convex approximations. The minimizer of the \citet{zafar2017fairnessB} and \citet{wu2019convexity} approximations is close to the trivially fair and uninformative solution obtained by $\coef = \bm{0}$. Meanwhile, the minimizer for the strong convex relaxation is able to achieve a better trade-off between fairness and accuracy for all regularization strengths. Note that since the plots consider only a single variable, if we were to solve the one-sided fairness problem \eqref{fair.training.oneside.regularized}, then by Proposition \ref{prop:conv.hull}, the strong relaxation would be ideal, thus exactly matching the true objective. 

 \subsubsection{Improved method for out-of-sample performance}\label{sec:fairClassificationApproximation}

In our experiments, we observed that imposing constraints such as $\widehat{\mathrm{DP}}_1\leq t$ can lead to solutions with poor out-of-sample performance. The resulting models, obtained by solving the non-convex problems exactly or via convex relaxations/coordinate descent, tend to produce models where $|\bm{w^\top x_i}|$ is close to $0$ for multiple data points $i\in [m]$. Consequently, even small perturbations in the input vector $\bm{x_i}$ can cause the sign of the prediction $\bm{w^\top x_i}$ to flip. This means seemingly fair estimators may actually violate statistical parity under slight perturbations, leading to poor out-of-sample performance. Interestingly, the convex proxies proposed by \citet{zafar2017fairnessB} and \citet{wu2019convexity} implicitly address this phenomenon. By approximating the discrete fairness constraints, these methods introduce larger penalizations or rewards as the distance of predictions from $0$. This results in less ambiguous estimators in-sample and better out-of-sample performance. 

The generalization properties of estimators based on good approximations of constraint $\widehat{\mathrm{DP}}_1\leq t$ can be further improved with a simple yet effective trick: even when fairness with respect to a single threshold $b=0$ is desired, we train the model with respect to a number of equally-spaced thresholds $\{-b_{\lfloor\ell/2\rfloor}, -b_{\lfloor\ell/2\rfloor-1}, -b_1, 0, b_1,\dots, b_{\lfloor\ell/2\rfloor-1}, b_{\lfloor\ell/2\rfloor}\}$. Introducing these artificial thresholds encourages estimators to maintain a larger margin around the threshold $0$. Our computational experiments in \S\ref{sec:compFairLogistic} show that applying this trick when solving the convex relaxation \eqref{mod.strong.formulation} results in estimators with superior out-of-sample performance compared to the convex proxies of \citet{zafar2017fairnessB} and \citet{wu2019convexity}.

\subsection{Extensions}\label{sec:extensions}

Thanks to the modeling power of mixed-integer optimization, various modifications to problems \eqref{eq:fairContinuous}-\eqref{eq:fairContinuousReg} can be seamlessly incorporated. A natural extension is the addition of regularization terms, resulting in optimization problems of the form 
\begin{subequations}\label{eq:fairContinuousRegularization}
\begin{align}
     \min_{\coef \in \R^n} & \sum_{i=1}^m\mathcal{L}\left(\coef^\top\bm{x}_i, y_i\right)+\mu\sum_{\ell=1}^n r(w_\ell)\\
\text{s.t.}\;&\max_{j\in [\ell]}\left| \frac{1}{m_1}\sum_{i=1: a_i = 1}^m \mathbbm{1}(\coef^\top\bm{x}_i > b_j) -\frac{1}{m} \sum_{i=1}^m \mathbbm{1}(\coef^\top\bm{x}_i > b_j)\right|\leq \epsilon,
\end{align}
\end{subequations}
where $\mu\geq 0$ and $r:\R\to\R_+$ is a suitable regularization function. Common choices of $r$ are the ridge penalty \citep{hoerl1970ridge} with $r(w_\ell)=w_\ell^2$, which induces robustness, the $\ell_1$ penalty \citep{tibshirani1996regression} with $r(w_\ell)=|w_\ell|$, which promotes sparsity, and the MCP/reverse Huber penalty/perspective regularization \citep{xie2020scalable,dong2018regularization,zhang2010nearly} with $r(w_\ell;d)=\min_{0\leq\zeta\leq 1}w_\ell^2/\zeta+d\zeta$ for some parameter $d>0$, which induces both robustness and sparsity. When the regularization function $r$ is convex, as is the case for the ridge, $\ell_1$ bad MCP/reverse Huber examples, it can be directly incorporated into the objective of \eqref{mod.strong.formulation} to yield strong convex relaxations.

Although we focus on demographic parity, the proposed methods are flexible and can accommodate notions of fairness based on conditional distributions over predictions \citep{rychener2022metrizing}, such as equal opportunity and equalized odds. Another flexibility is the norm used to measure deviations from perfect fairness. Notice that \eqref{def.dp} corresponds to the $L_\infty$ norm of the difference between the conditional and marginal cumulative distribution, also called the Kolmogorov-Smirnov statistic \citep{agarwal2019fair}. Other common norms that could be considered are the $L_1$ norm (corresponding to Wasserstein distance \citep{ye2024distributionally}) and the squared $L_2$ norm (corresponding to Cramér distance \citep{rychener2022metrizing}).

Finally, the problem formulation can easily be adapted to generalized linear models \citep{nelder1972generalized}, other measures of fairness, as well as accommodate multiple protected classes with minimal changes. Generalized linear models make predictions by transforming the linear output $\coef^\top\bm{x}$ by means of a function $g$: $\hat{y} = g^{-1}\left(\coef^\top\bm{x}\right)$. Generalized linear models go beyond labels that are real-valued, and can handle binary, categorical, and count outcomes, covering ubiquitous models such as least squares, logistic, multiclass, and Poisson regression. The only change required to accommodate a generalized linear model is to use 
indicators over non-linear inequalities  to compute empirical fairness
: $\mathbbm{1}(g^{-1}(\coef^\top\bm{x}_i) > b))$, which can be conveniently linearized by converting the condition to an equivalent linear one, $\mathbbm{1}(\coef^\top\bm{x}_i > g(b))$.

\section{Computational experiments} \label{sec:experiments}

In this section, we present computational experiments to test the optimization and empirical performance of the fair regression models trained using the three proposed approaches described in \S\ref{sec:comparisons}: the mixed integer formulation, the strong convex relaxation, and coordinate descent. 
First, we describe the synthetic and real datasets used for experimental results in \S\ref{sec:numerical.datasets}. Then, in \S \ref{sec:optimization.performance}, we present numerical results on synthetic data comparing the optimization performance of the proposed methodologies for least squares regression with each other and the \basicmodel\ formulation. Our findings indicate that although the branch-and-bound produces high-quality results, it does not currently scale beyond small instances. In contrast, the relaxation and coordinate descent methods promote fairness in a fraction of the time. In \S \ref{sec:computationsReal}, we present results on real datasets for both least squares and logistic regression, comparing the relaxation and coordinate descent with the state-of-the-art approaches from the literature.
All experiments were run in Python 3.11 using Gurobi 10.0.0 solver for least squares regression problems and MOSEK 10.2.3 for logistic regression problems with a one-hour time limit.

\subsection{Datasets} \label{sec:numerical.datasets}
We describe the synthetic and real datasets used to perform experiments.
\paragraph{Synthetic data.}
We generate a synthetic dataset following the method outlined in \citet{ye2024distributionally}. For a given number of features $n$, the data is generated from $y = \coef^\top \bm{x} + \varepsilon$ with the first $\left\lfloor n/2 \right\rfloor$ elements of $\coef$ sampled i.i.d from $\mathrm{Unif}(-1,0)$ and the next $\left\lfloor n/2\right\rfloor$ elements sampled i.i.d. from $\mathrm{Unif}(0,10)$. The last element is set to zero, $w_n = 0$. Given a population size $m$, the first $\left\lceil 3m/4\right\rceil$ observations are assigned sensitive attribute $a=0$, and the remainder minority population $a=1$. Labels are scaled to a normalized range $[0,1]$. We fix the number of features to be $n=10$ and for each $m\in\{15, 30, 50, 100\}$ observations, we generate five random instances.
\paragraph{Real data for least squares models.}
\begin{itemize}
    \item \emph{Communities \& Crime} \citep{communities}: This dataset contains 127 features measuring socio-economic, law enforcement, and crime statistics for 1994 communities in the United States. The objective is to build a model that predicts the rate of violent crimes per 100,000 population, with race as the sensitive attribute.
    \item \emph{Law School} \citep{lawschool}: A dataset of 20,649 records of law school students, tracking their GPA (normalized to be in $[0,1]$) and 12 features related to students' profile. The goal is to build a model that accurately predicts an applicant's GPA after enrollment while maintaining fairness with respect to race. Following the experimental set-up of \citet{agarwal2019fair}, we perform experiments on the entire dataset and a subsampled version, which is generated by randomly sampling 2,000 observations.
    \end{itemize}
\paragraph{Real data for logistic regression models.}
    \begin{itemize}
    \item \emph{Adult} \citep{adult}: A dataset comprised of 103 socioeconomic features per individual. The objective is to accurately predict the probability of an individual having an annual salary that exceeds \$50k while maintaining fairness with respect to sex. We randomly sample 2,000 observations.
    \item \emph{CelebA} \citep{celebA}: This dataset consists of feature vectors of $n=38$ binary features associated with celebrity images to predict whether the image is of someone smiling or not. Sex is used as the protected feature. We randomly sample $1,000$ observations.
\end{itemize}
Each dataset is randomly split evenly into training and testing sets.

\subsection{Computational performance of proposed formulations and algorithms}\label{sec:optimization.performance}
In this section, our goal is to evaluate how effectively the proposed methods can solve or approximate the fair regression problem \eqref{eq:fairContinuousRegAbs}. Here we focus on solution time and objective value, and defer the study of the statistical out-of-sample performance to \S\ref{sec:computationsReal}. As the exact mixed-integer optimization formulations do not scale to $m\approx 1,000$, the computations in this section are performed on synthetic datasets with fewer observations.
To this end, we conduct a series of experiments focused on solving a least squares regression problem with fairness regularization using the proposed methods.
The thresholds used to approximate exact fairness are $b_{i+1} = i\times (1/40), i = 0, \ldots, 40$, creating 40 equispaced intervals in $[0,1]$ used to measure fairness by $\widehat{\mathrm{DP}}_{41}$.
\paragraph{Methods}
In this section, we test the fair regression estimators $\bm{w}$ obtained by the following methods: 

\begin{itemize}
    \item \texttt{Big-M}: the natural mixed-integer \basicmodel~formulation \eqref{mod.bigM}.
    \item \texttt{MICQO}: the mixed integer conic quadratic strong formulation described in \S\ref{sec:algMIP} for least squares regression.
    \item \texttt{Relax}: the proposed strong convex relaxation \eqref{mod.strong.formulation} described in \S\ref{sec:algConvexRelax}.
    \item \texttt{CD-relax}: coordinate descent described in \S\ref{sec:algCoordinateDescent} initialized with the solution obtained by \texttt{Relax}.
    \item \texttt{CD-acc}: coordinate descent initialized with the solution to the vanilla least squares regression problem (most accurate but not fair).
    \item \texttt{CD-fair}: the coordinate descent algorithm initialized with the trivially fair solution, $\bm{w}=\bm{0}$ (most fair but not accurate).
\end{itemize}

Since there is randomness in the coordinate-wise locally optimal solutions found by coordinate descent, we solve coordinate descent for each instance with the same initial solution five times with different random seeds and record the best solution obtained. 
\subsubsection{MIO formulations}\label{sec:computationsMIO}
The first set of experiments compares the optimization performance of \texttt{Big-M} and \texttt{MICQO}. In Table~\ref{tab:computational.results.synthetic}, we report the averages of the following measurements for 20 instances per row (five per value of $m$): \texttt{root gap}, calculated as $(\mathrm{obj}_{\mathrm{UB}}-\mathrm{obj}_{\mathrm{relax}})/\mathrm{obj}_{\mathrm{UB}}$, which is the relative gap between the continuous relaxation of the formulation and the best-known objective, the \texttt{end gap} reported by the solver after the time limit has been reached, the \texttt{time} the solver took to perform branch-and-bound in seconds, and the number of branch and bound \texttt{nodes} explored within the time limit.

The experiments show that the problem becomes harder to solve as greater fairness is demanded from the model. Unsurprisingly, the \texttt{Big-M} root gap is large, over 50\% for almost all instances, which negatively affects the performance of the branch-and-bound algorithm. The evidence of poor branch-and-bound performance is seen in the excessive run times and node count. \texttt{MICQO} shows a substantial improvement over the \texttt{Big-M} formulation. It has a significantly stronger relaxation at the root node, with an average root gap of $16.9\%$, compared to $76.9\%$ for \texttt{Big-M}, resulting in smaller branch-and-bound trees and faster run times to prove optimality. The average end gap of 4.9\% reveals that the best solution is of high quality even for instances where optimality is not verified within the time limit. Note that the root gap for the strong convex relaxation of \texttt{MICQO} is obtained in merely 0.25 seconds on average and is already substantially smaller that the end gaps of the \texttt{big-M} method. This suggests that the strong convex relaxation model may have statistical merit on its own, which we demonstrate in subsequent experiments presented in this section.

\begin{table}[h]
\small
\centering
\caption{Performance of \texttt{Big-M} and \texttt{MICQO} for fair linear regression: synthetic data.}
\begin{tabular}{c rrrr rrrr}
\toprule
\multirow{2}{*}{$\lambda$} & \multicolumn{4}{c}{\texttt{Big-M}} & \multicolumn{4}{c}{\texttt{MICQO}} \\ 
\cmidrule(lr){2-5} \cmidrule(lr){6-9}
 & \shortstack{Root \\Gap} & \shortstack{End\\Gap} & Time & Nodes & \shortstack{Root\\Gap} & \shortstack{End\\Gap} & Time & Nodes \\
\midrule
0.01 & 45.3\% & 2.1\%  & 487 & 4,010,293 & 1.9\% & 0.0\%  & 252 & 223,581 \\
0.02 & 60.8\% & 9.0\%  & 1,654 & 19,895,563 & 5.4\% & 0.7\%  & 908 & 1,079,684 \\
0.04 & 73.2\% & 19.1\% & 2,040 & 20,405,258 & 8.3\% & 1.9\%  & 950 & 902,741 \\
0.05 & 76.6\% & 28.4\% & 2,681 & 29,063,625 & 11.3\% & 2.6\%  & 1,212 & 1,058,541 \\
0.06 & 78.8\% & 30.7\% & 2,759 & 28,508,133 & 13.1\% & 2.9\%  & 1,507 & 1,447,340 \\
0.08 & 81.5\% & 38.1\% & 3,023 & 32,654,651 & 14.0\% & 4.0\%  & 1,673 & 1,284,442 \\
0.10 & 83.4\% & 47.4\% & 3,196 & 35,020,673 & 16.0\% & 4.6\%  & 1,803 & 1,275,695 \\
0.20 & 87.7\% & 55.3\% & 3,147 & 28,630,313 & 22.9\% & 8.6\%  & 1,625 & 683,419 \\
0.30 & 89.5\% & 62.8\% & 3,295 & 29,468,759 & 31.2\% & 10.8\% & 1,555 & 507,315 \\
0.50 & 91.7\% & 73.6\% & 3,600 & 31,892,472 & 45.5\% & 12.7\% & 1,611 & 437,835 \\
\midrule
\textbf{Avg} & \textbf{76.9\%} & \textbf{36.6\%} & \textbf{2,588} & \textbf{25,954,974}& \textbf{16.9\%} & \textbf{4.9\%} & \textbf{1,310} & \textbf{890,059} \\
\bottomrule
\end{tabular}
\label{tab:computational.results.synthetic}
\end{table}

\subsubsection{Coordinate descent}\label{sec:computationsCoordinate}

Despite the improvement over the basic big-$M$ formulation, \texttt{MICQO} is still computationally expensive for small instances with $n=10, ~m=100$. Motivated by the small relaxation gap for the strong formulation, we investigate the ability of the coordinate descent method to improve heuristically obtained solutions as an alternative to branch-and-bound. Table~\ref{tab:cd.performance} reports the \textbf{optimality gaps} for fair regression by the convex relaxation, coordinate descent algorithm, and the strong MIO formulation. The optimality gap is the relative gap between the best-known lower bound for the problem and the upper bound found by some model $\bm{w}$: $(\mathrm{obj}_{\bm{w}} - \mathrm{obj}_{\mathrm{LB}})/\mathrm{obj}_{\bm{w}}$. On average, coordinate descent terminates within two seconds.

Table~\ref{tab:cd.performance} confirms increasing difficulty in providing provably high-quality solutions as the fairness requirement becomes more stringent. We observe that coordinate descent is able to substantially improve upon an initial solution: \texttt{CD-relax} halves the optimality gap achieved by \texttt{Relax}. The initial solution used as input to the coordinate descent algorithm affects the quality of the outputted model. A consistent pattern is observed: \texttt{CD-relax} dominates \texttt{CD-acc}, which itself dominates \texttt{CD-fair}. \texttt{MICQO} results in higher-quality solutions, achieving an average optimality gap $3\times$ less than the second best method \texttt{CD-relax}, but is substantially more expensive even in datasets with few datapoints.

\begin{table}[H]
    \centering
    \caption{Optimality gaps with convex relaxation, coordinate descent starts, and branch-and-bound. \texttt{Relax} and  \texttt{CD-relax} terminate within 0.25 seconds and two seconds respectively, while \texttt{MICQO} may hit the one-hour time limit. \texttt{CD-relax} achieves an attractive trade-off between quality and computational time.}
\begin{tabular}{c|rrrrr}
\toprule 
 $\lambda$ & \texttt{Relax}\% & \texttt{CD-fair}\% & \texttt{CD-acc}\% & \texttt{CD-relax}\% & \texttt{MICQO}\%\\
 \hline
0.01 & 7.9 & 6.3 & 3.5 & 2.1 & 0.0 \\
0.02 & 13.4 & 9.0 & 5.9 & 4.1 & 0.7 \\
0.04 & 19.5 & 13.6 & 10.0 & 7.8 & 1.9 \\
0.05 & 24.0 & 16.1 & 10.9 & 8.6 & 2.6 \\
0.06 & 24.6 & 16.3 & 11.9 & 9.3 & 2.9 \\
0.08 & 26.3 & 22.2 & 17.0 & 13.3 & 4.0 \\
0.10 & 32.1 & 23.9 & 21.7 & 15.8 & 4.6 \\
0.20 & 44.1 & 37.1 & 31.0 & 24.6 & 8.6 \\
0.30 & 50.0 & 57.5 & 38.3 & 28.9 & 10.8 \\
0.50 & 56.2 & 65.4 & 44.2 & 36.5 & 12.7 \\
\hline
\textbf{Avg} & \textbf{29.8} & \textbf{26.7} & \textbf{19.4} &\textbf{15.1} & \textbf{4.9} \\
\bottomrule 
\end{tabular}
    \label{tab:cd.performance}
\end{table}

In Figure~\ref{fig:synth100}, we show the disaggregated objective terms measuring accuracy (mean squared error) and fairness ($\widehat{\mathrm{DP}}_{41}$) of various models on instances with $m=100$, as they are the most challenging.
Methods that outperform others in terms of optimality gap in Table~\ref{tab:cd.performance}, also excel in both accuracy and fairness. Although \texttt{MICQO} produces the best models in terms of solution quality, it requires substaintial computation time, often reaching the one-hour time limit. In contrast, \texttt{CD-relax} achieves high-quality solutions in a few seconds, making it a more practical choice for larger problems. Similarly, \texttt{Relax} obtains solutions that perform well in regimes with milder fairness requirements, an order of magnitude faster than \texttt{CD-relax}; nonetheless,
\texttt{Relax} struggles to produce estimators with $\widehat{\mathrm{DP}}_{41} < 0.07$. The limitation of convex approximations, which are often too relaxed to enforce stringent fairness levels, has been highlighted in \citet{lohaus2020too}. 


\begin{figure}[H]
    \centering
    \includegraphics[width=0.75\linewidth]{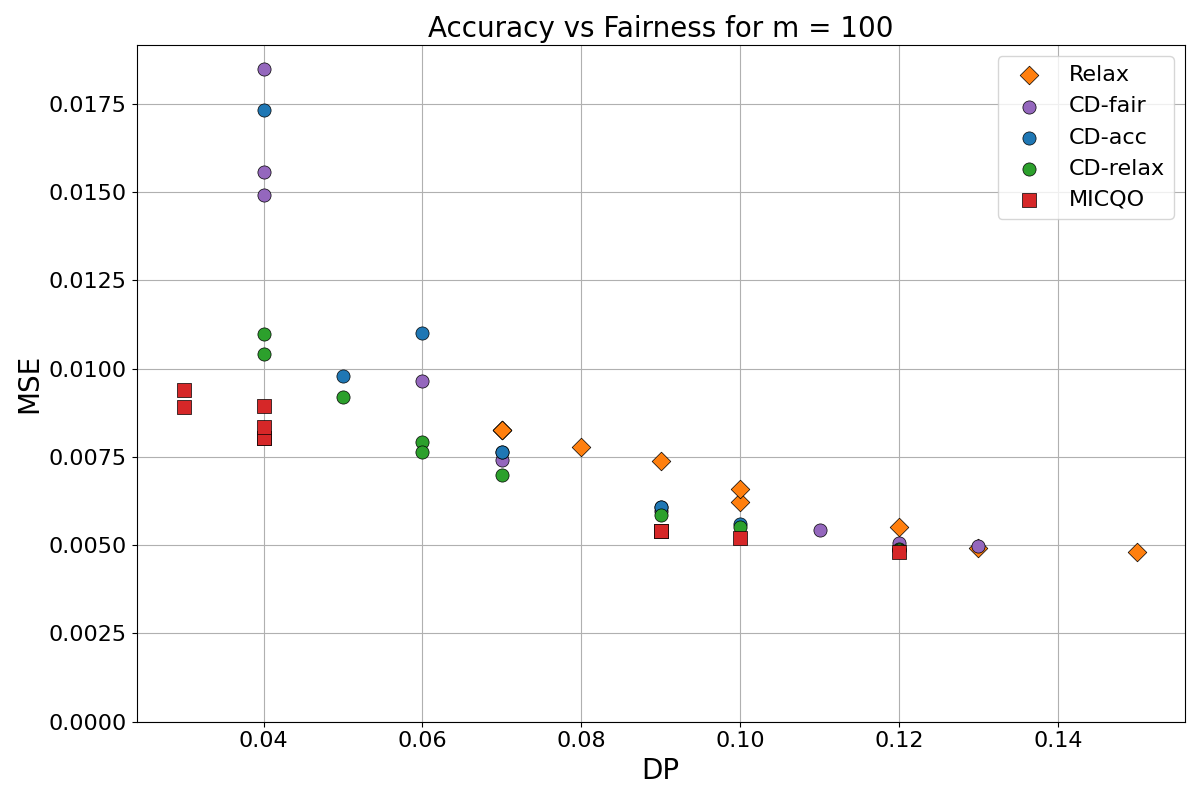}
    \caption{Accuracy (MSE) vs. fairness $\left(\widehat{\mathrm{DP}}_{41}\right)$ on synthetic data. \texttt{MICQO} achieves the best trade-off between accuracy and fairness but requires considerable computation time, often hitting the imposed one-hour time limit. On the other hand, \texttt{CD-relax} produces solutions that approach the quality of those attained by \texttt{MICQO} within seconds.}
    \label{fig:synth100}
\end{figure}

\subsection{Experiments with real benchmark data}\label{sec:computationsReal}
In this section, we evaluate the proposed methodologies for training fair regression models on real data and compare their statistical performance with the state-of-the-art methods from the literature. We present results for least squares regression in \S\ref{sec:realLeastSquares} and for logistic regression in \S\ref{sec:compFairLogistic}. Additionally, in \S\ref{sec:realClass}, we compare our methods discussed in \S\ref{sec:fairClassificationApproximation} with approximations from the literature for the special case of $\ell = 1$.
\paragraph{Methods} We consider the following methods for real data experiments:
\begin{itemize}
    \item \texttt{Relax}: the strong convex relaxation.
    \item \texttt{Relax + $L_2$} and \texttt{Relax +  Sparse}: the strong convex relaxation with $L_2$ regularization and the strong convex relaxation with perspective regularization, as discussed in \S\ref{sec:extensions}.
    \item \texttt{CD-relax} and \texttt{CD-relax + $L_2$}: coordinate descent with and without $L_2$ regularization.
    \item \texttt{FR-Reduction}: the reduction-based algorithm for fair regression introduced by \citet{agarwal2019fair}.
    \item \texttt{Linear-Approx} and \texttt{Convex-Approx}: the approximations for $\widehat{\mathrm{DP}}_1$ proposed by \citet{zafar2017fairnessB} and \citet{wu2019convexity} respectively discussed in \S\ref{sec:fairClassificationLiterature}.
\end{itemize}
We evaluate the performance of the different methodologies by comparing the accuracy and fairness scores achieved by each model. To measure accuracy, we evaluate the \textbf{relative loss increase}, which represents the increase in loss relative to the vanilla regression model (the most accurate but least fair model), computed as $100\times \frac{\mathrm{loss} - \mathrm{loss}_\mathrm{unfair}}{\mathrm{loss}_\mathrm{unfair}}$. This metric quantifies the additional error incurred by imposing fairness over the baseline. We also present both $\widehat{\mathrm{DP}}_{\ell}$ and $\widehat{\mathrm{DP}}$ as defined in \eqref{def.dp.hat} to evaluate the model fairness scores.

\subsubsection{Fair least squares regression}\label{sec:realLeastSquares}
We evaluate the performance of the proposed heuristic \texttt{Relax} on two real datasets commonly used to test fair least squares regression and compare with \texttt{FR-Reduction}. Specifically, we consider constrained fairness problems of the type \eqref{fair.training.constr}, and compare models solved by $\texttt{Relax}$ and \texttt{FR-Reduction} for the same $\epsilon$ values.
We solve two additional convex relaxations, \texttt{Relax} + $L_2$ and \texttt{Relax+Sparse}, to explore the impacts of regularization for model generalization on top of fairness constraints. 
The sparsifying perspective regularizer is tested only on \emph{Communities \& Crime}, as the \emph{Law School} dataset has only twelve features.

Figures~\ref{fig:communities} and \ref{fig:law} show the accuracy-fairness curves achieved for on the \emph{Communities \& Crime}, sub-sampled \emph{Law School} and the entire \emph{Law School} datasets, respectively. Each figure consists of two rows: the first row shows the Pareto-frontier obtained on the training, while the second row shows results on the testing dataset. The first column of each figure shows the results as measured by the discretized fairness metric $\widehat{\mathrm{DP}}_{41}$, while the second column shows the exact fairness $\widehat{\mathrm{DP}}$.

Before discussing the performance of these models, it is worth noting that the discretized fairness metric approximates exact fairness quite effectively. Regarding the statistical performance, we observe that \texttt{Relax} is competitive with \texttt{FR-Reduction} on average across all datasets. The extent of the improvement of \texttt{Relax} varies by dataset, with the most significant gains observed in the \emph{Communities \& Crime} experiment. Moreover, regularization seems to be particularly beneficial for this dataset. 

\begin{figure}[t]
    \centering
    \includegraphics[width=\textwidth]{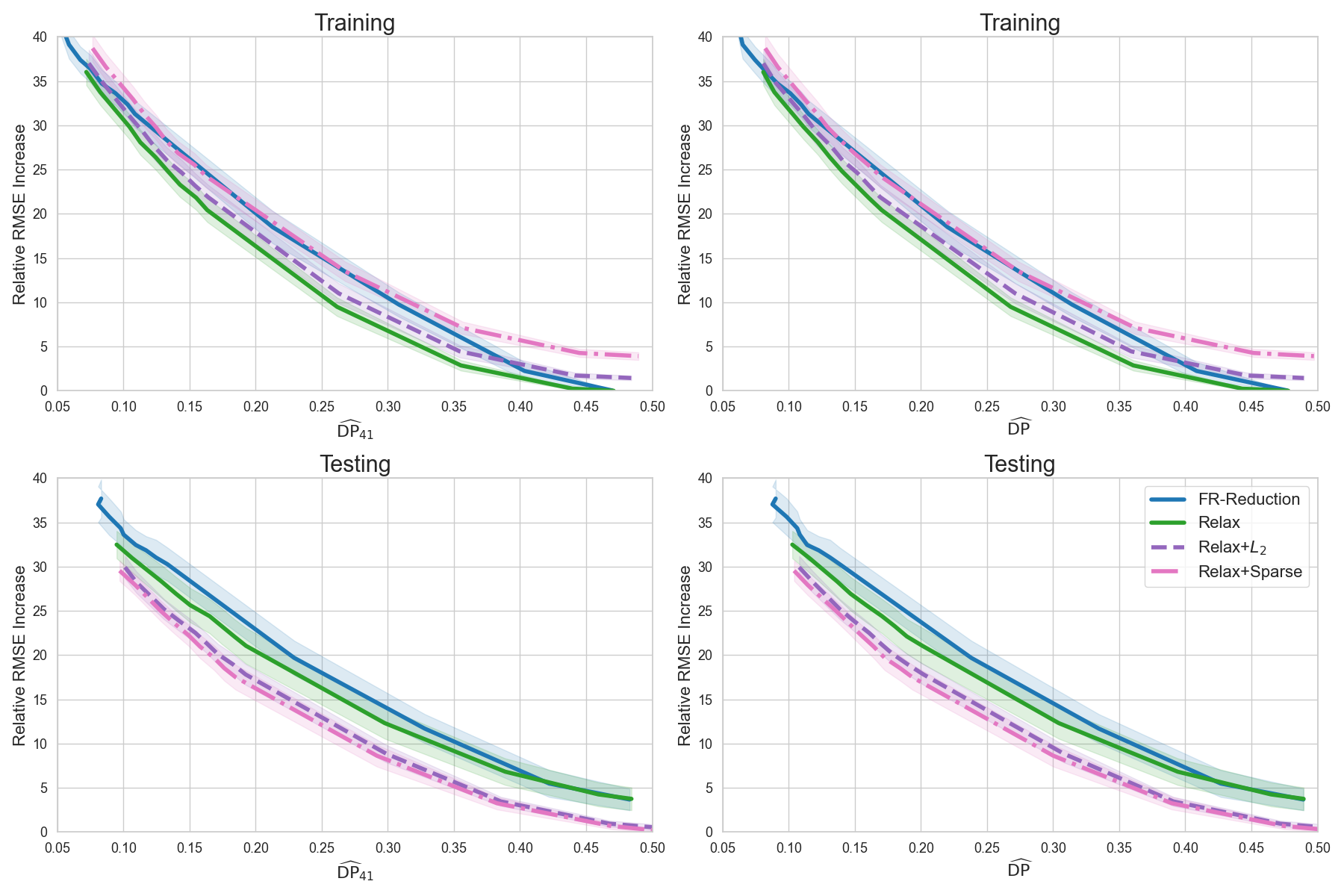}
    \caption{Accuracy-fairness trade-off curves obtained by models trained using \texttt{FR-Reduction} and \texttt{Relax} on the \emph{Communities \& Crime} dataset. Each curve represents the mean over 10 trials, with a 95\% confidence interval band on the relative RMSE. \texttt{Relax} and regularized variants produce least squares regression models that are competitive with or outperform the state-of-the-art in terms of out-of-sample performance with over $30\times$ improvement in runtimes.}
    \label{fig:communities}
\end{figure}

The average improvement of \texttt{Relax} over \texttt{FR-Reduction} is less pronounced in the experiments on the \emph{Law  School} datasets. An observation that aligns with previous results can be made, where \texttt{Relax} effectively improves fairness up to a certain threshold. Additional tests conducted with progressively smaller $\epsilon$ values show that the convex relaxation eventually converges to a model, and no further fairness improvement can be achieved using \texttt{Relax} alone. To overcome this limitation, we utilize \texttt{CD-relax} and present the results. The coordinate descent method produces models with lower $\widehat{\mathrm{DP}}$ than the fairest model produced by the convex relaxation. Moreover, it produces models with better accuracy for the same levels of $\widehat{\mathrm{DP}}$, improving the trade-off curve between fairness and accuracy. 

Table~\ref{tab:runtime_comparison} shows the computational time (in seconds) required by each method to train a model. Notably, we observe an average speed-up of $33\times$ over \texttt{FR-Reduction} when using \texttt{Relax} to train linear regression models. This substantial improvement is expected as \texttt{Relax} solves a single convex problem, whereas \texttt{FR-Reduction} requires solving a sequence of problems. For both methods, the computational time increases as fairness constraints become more stringent. 

A subtle but critical difference between our approach and the benchmark is that the model produced by \texttt{FR-Reduction} is a randomized predictor made up of $N$ models, $\bm{w}^1, \bm{w}^2, \ldots, \bm{w}^{N}$, each one with an associated probability $p_1, p_2, \ldots, p_{N}$. The reported accuracy and fairness metrics for \texttt{FR-Reduction} are calculated by taking the predictor's expected accuracy and expected fairness. While randomized predictors can achieve improved Pareto-frontier curves, they lack transparency and interpretability. These limitations may be particularly concerning in sensitive or legal contexts with fairness considerations. Upon further investigation, we found that the randomized predictor produced by \texttt{FR-Reduction} combines low-probability models that are fair but inaccurate and high-probability models that are accurate but less fair. In contrast, the proposed methodologies in this work provide deterministic models, offering greater interpretability.

\begin{figure}[t]
    \centering
    \begin{tabular}{cc}
        \includegraphics[width=0.5\textwidth]{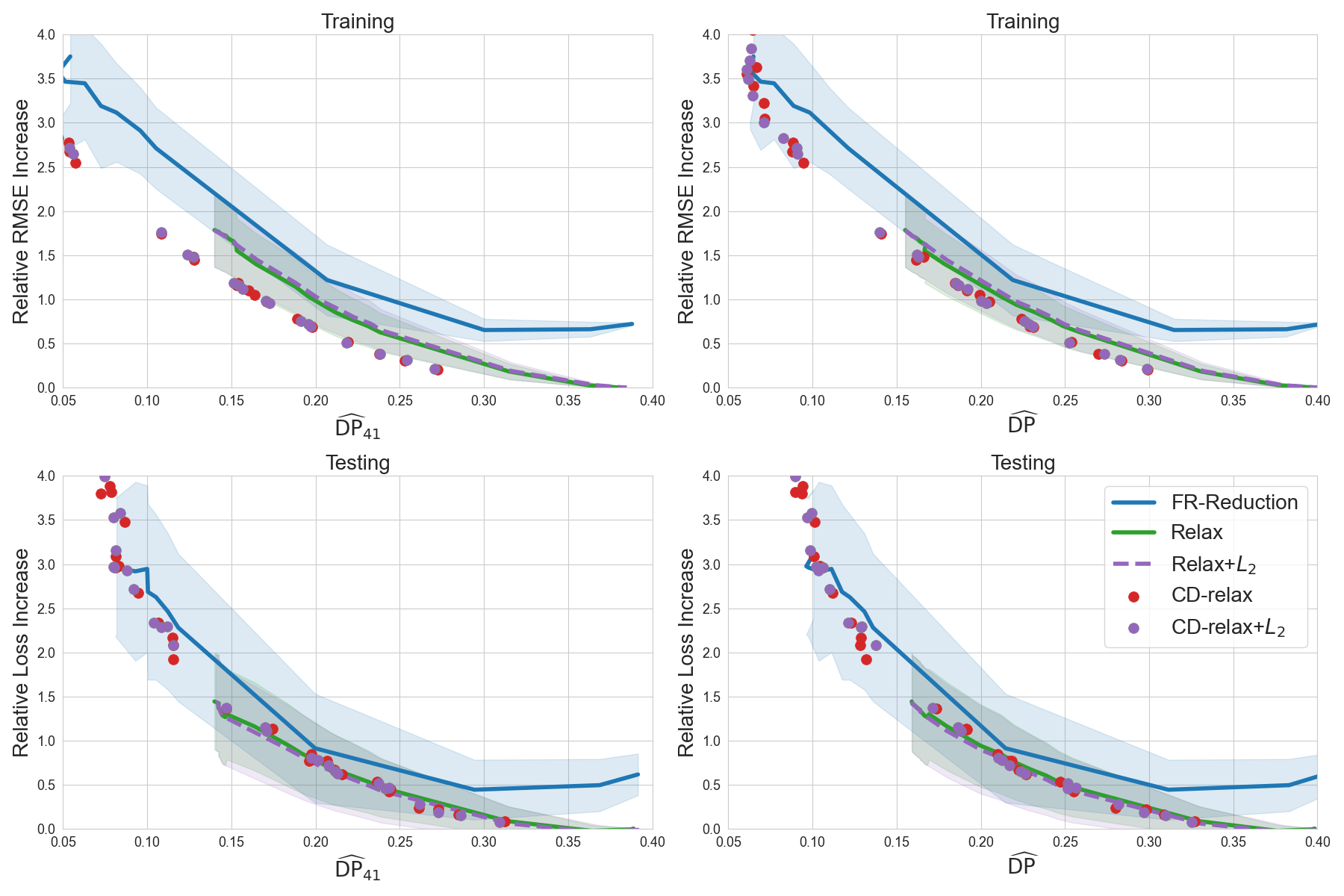} &  \includegraphics[width=0.5\textwidth]{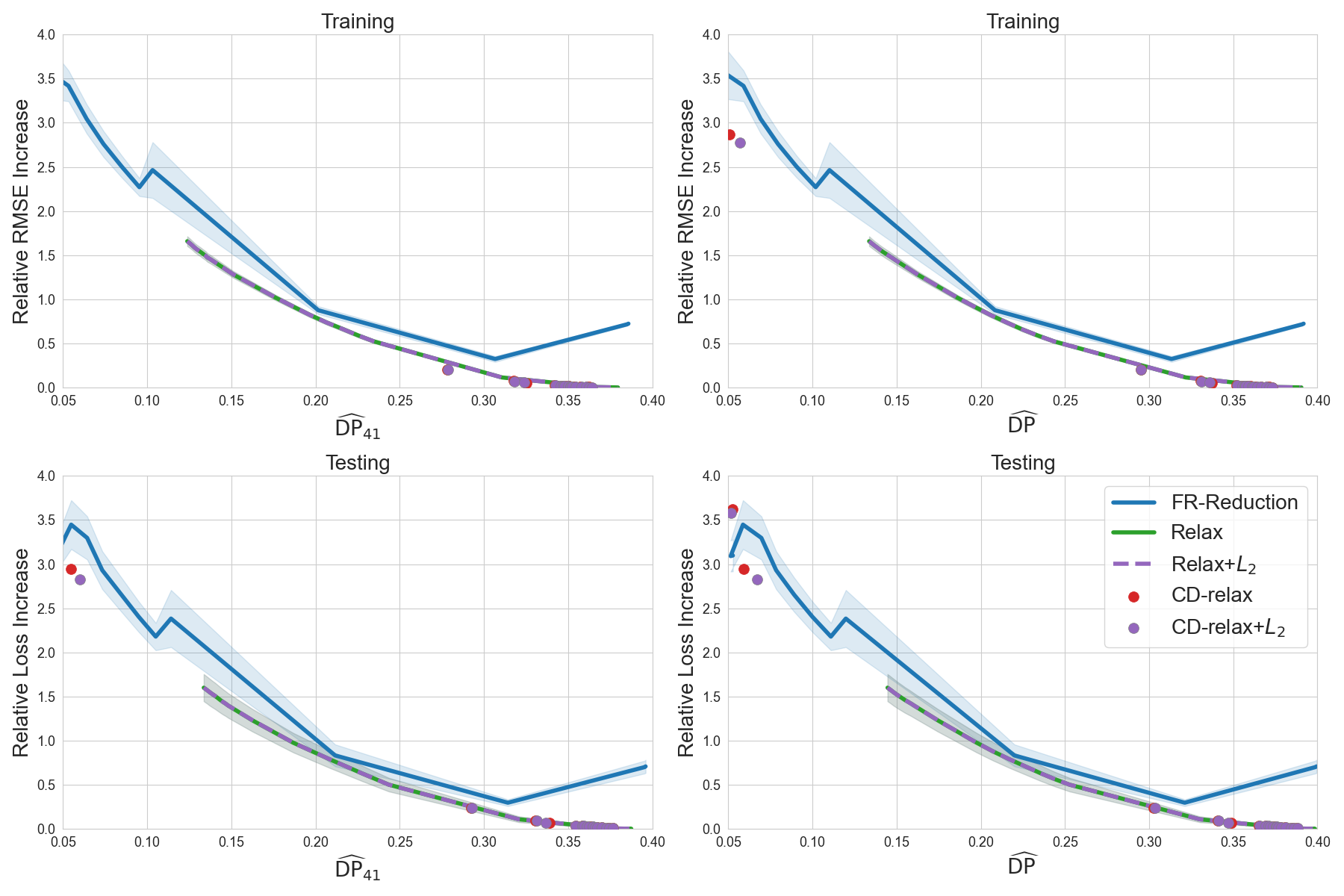} \\
        (a) Sub-sampled  & (b) Full dataset
    \end{tabular}
    \caption{Accuracy-fairness trade-off curves obtained by models trained using \texttt{FR-Reduction} and \texttt{Relax} and \texttt{CD-relax} on the sub-sampled and full \emph{Law School} dataset.}
    \label{fig:law}
\end{figure}


\begin{table}[H]
\footnotesize
\centering
\caption{Computational time (s) for \texttt{FR-Reduction} and \texttt{Relax} to solve linear regression for real data. Both \texttt{Relax} and \texttt{CD-relax} are significantly faster than alternatives from the literature.}
\begin{tabular}{c rr rr rr rr rr rr}
\toprule
\multirow{2}{*}{$\epsilon$} & \multicolumn{2}{c}{Communities ($m=1,994$)} & \multicolumn{3}{c}{Law School ($m=2,000$)} & \multicolumn{2}{c}{Law School ($m=20,649$)} \\ 
\cmidrule(lr){2-3} \cmidrule(lr){4-6} \cmidrule(lr){7-9}
 & \ \ \texttt{FR-Reduction} & \texttt{Relax} & \ \ \texttt{FR-Reduction} & \texttt{Relax} & \texttt{CD-relax} & \ \ \texttt{FR-Reduction} & \texttt{Relax} & \texttt{CD-relax} \\
\midrule
0.01 & 406.76 & 5.25 & 197.90 & 4.68 & 25.78 & 2,246.24 & 62.33 & 664.47 \\
0.02 & 328.39 & 5.17 & 200.91 & 4.38 & 27.47 & 1,802.27 & 54.71 & 867.71 \\
0.04 & 226.57 & 5.05 & 164.97 & 4.04 & 29.22 & 1,564.63 & 61.21 & 556.14 \\
0.05 & 195.04 & 5.14 & 147.50 & 4.09 & 51.64 & 1,506.03 & 60.18 & 551.50 \\
0.06 & 175.84 & 5.28 & 144.14 & 4.10 & 25.88 & 1,506.71 & 62.67 & 443.99 \\
0.07 & 175.93 & 4.93 & 144.12 & 3.96 & 33.54 & 1,516.28 & 56.72 & 678.62 \\
0.08 & 160.05 & 4.98 & 138.49 & 3.94 & 48.30 & 1,516.17 & 57.92 & 863.78 \\
0.09 & 143.09 & 4.86 & 132.61 & 3.80 & 37.52 & 1,516.31 & 58.48 & 842.36 \\
0.10 & 151.83 & 4.89 & 129.63 & 3.77 & 41.39 & 1,419.45 & 63.81 & 629.22 \\
0.20 & 146.59 & 5.17 & 121.76 & 3.61 & 50.09 & 1,357.38 & 65.18 & 552.85 \\
0.30 & 141.50 & 4.83 & 106.62 & 3.40 & 43.21 & 1,237.72 & 47.46 & 689.60 \\
0.40 & 156.23 & 4.44 & 83.93 & 3.26 & 137.85 & 645.76 & 39.43 & 492.56 \\
0.50 & 76.47 & 3.91 & 56.85 & 3.11 & 34.83 & 590.58 & 39.40 & 335.15 \\
\midrule
\textbf{Avg} & \textbf{191} & \textbf{4.9} & \textbf{136} & \textbf{3.9} & \textbf{45} & \textbf{1,417} & \textbf{56} & \textbf{628} \\
\bottomrule
\end{tabular}
\label{tab:runtime_comparison}
\end{table}

\subsubsection{Fair logistic regression}\label{sec:compFairLogistic}


This section compares the logistic regression models trained using \texttt{FR-Reduction} with those derived by solving \texttt{Relax}. Figure~\ref{fig:adult} presents the accuracy-fairness curves for the \emph{Adult} dataset, where accuracy is measured by the relative average training and testing logistic loss for the models. Following the experiment design in \citet{agarwal2019fair}, the discretization used to train the model is the grid from $-5$ to $5$, consisting of 40 equispaced intervals, $[-5, -4.75, -4.50, \ldots, 5]$. Consistent with the findings in the least squares setting, we see improved performance in both training and testing phases with \texttt{Relax}. Moreover, \texttt{Relax} builds logistic regression models faster than the benchmark: on average, the benchmark takes 103 secs. to build a fair logistic regression model, compared to 50 secs. for \texttt{Relax}. \texttt{CD-relax} is able to produce models that achieve better levels of fairness, although at the expense of runtimes of 188 seconds on average.

\begin{figure}
    \centering
    \includegraphics[width=0.75\linewidth]{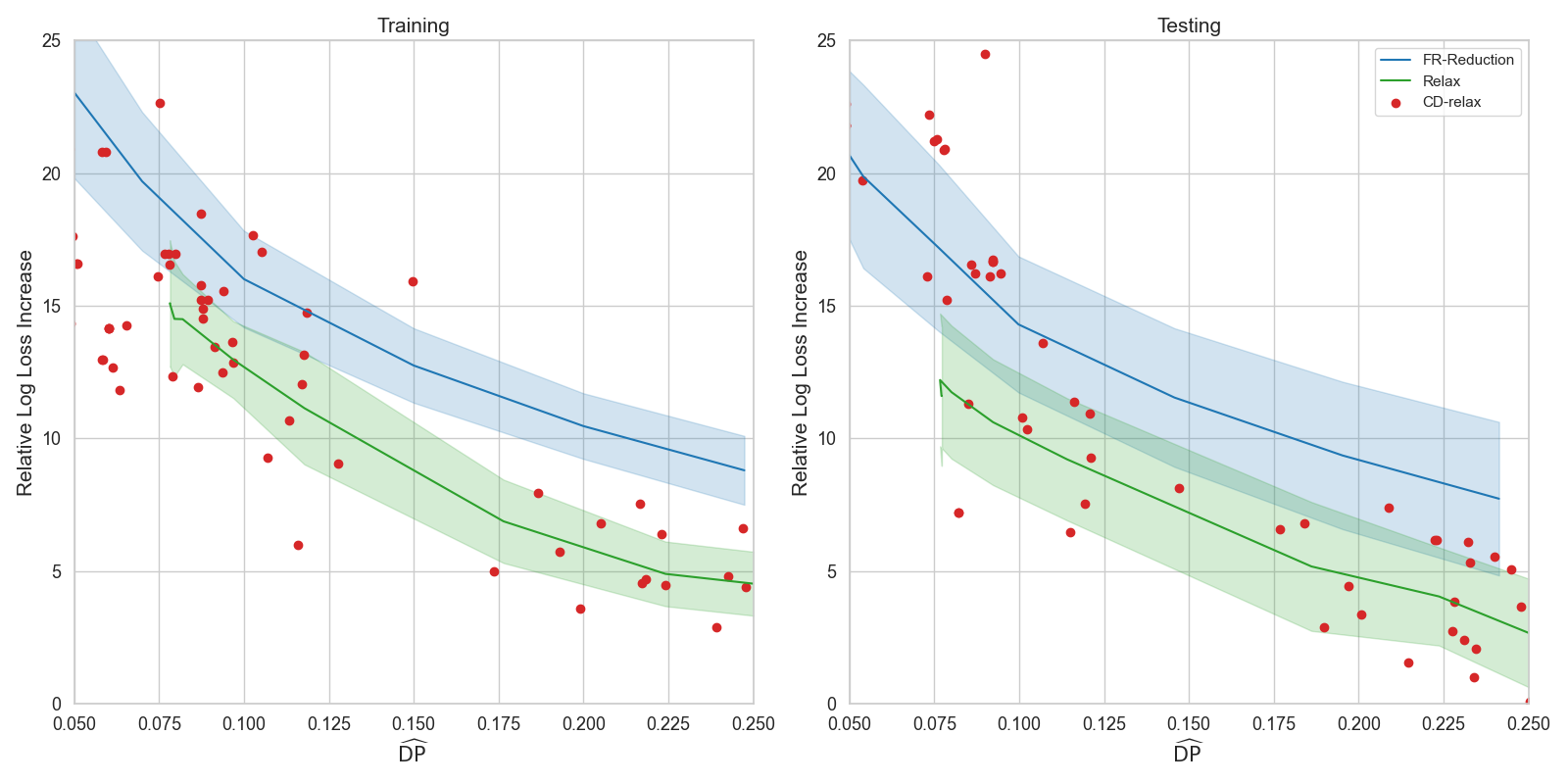}
    \caption{Accuracy-fairness trade-off curves obtained by models trained using \texttt{FR-Reduction}, \texttt{Relax} and \texttt{CD-Relax} on the \emph{Adult} dataset. \texttt{Relax} produces logistic regression models that are competitive with the state-of-the-art in terms of out-of-sample performance in half the computational runtime.}
    \label{fig:adult}

\end{figure}
\subsubsection{Computations with fair classification metrics}\label{sec:realClass}
This section presents results for solving logistic regression when fairness is considered at a single threshold, that is when the fairness metric can be represented by $\widehat{\mathrm{DP}}_1$. We compare the proposed methodologies with the approximations discussed in \S\ref{sec:fairClassificationLiterature}, \texttt{Linear-Approx} and \texttt{Convex-Approx}. Since $\widehat{\mathrm{DP}}_1$ only requires two binary variables per observation to formulate the mixed-integer optimization problem, we can solve the exact problem to optimality using the branch-and-bound method within a few minutes. Therefore, this exact approach is included as part of the numerical results. Experiments are conducted on logistic regression models for the \emph{CelebA} dataset.

We present an initial set of results comparing the model obtained by solving the convex relaxation with $\widehat{\mathrm{DP}}_1$ as a fairness-promoting regularizer to those trained using $\widehat{\mathrm{DP}}_\ell$, $\ell>1$. As explained in \S\ref{sec:fairClassificationApproximation}, training with $\widehat{\mathrm{DP}}_1$ can result in poor out-of-sample performance, which we attempt to mitigate by adding artificial thresholds to measure fairness in the training problem. We solve the convex relaxation of the logistic regression training problem with $\widehat{\mathrm{DP}}_\ell$ as a regularizer, dividing the range $[-3,3]$ into $\ell$ equispaced breakpoints, with $\ell \in \{3,5,9,15\}$ and present the results in Figure~\ref{fig:celebaA.stat.performance}. Observe that in both the training and the testing phases, the convex approximation with $\ell>1$ effectively mitigates the limitations noted for the convex models, achieving high levels of fairness. Interestingly, approximations with high granularity slightly outperform \texttt{MIO} in out-of-sample tests. 

\begin{figure}[t]
    \centering\includegraphics[width=\linewidth]{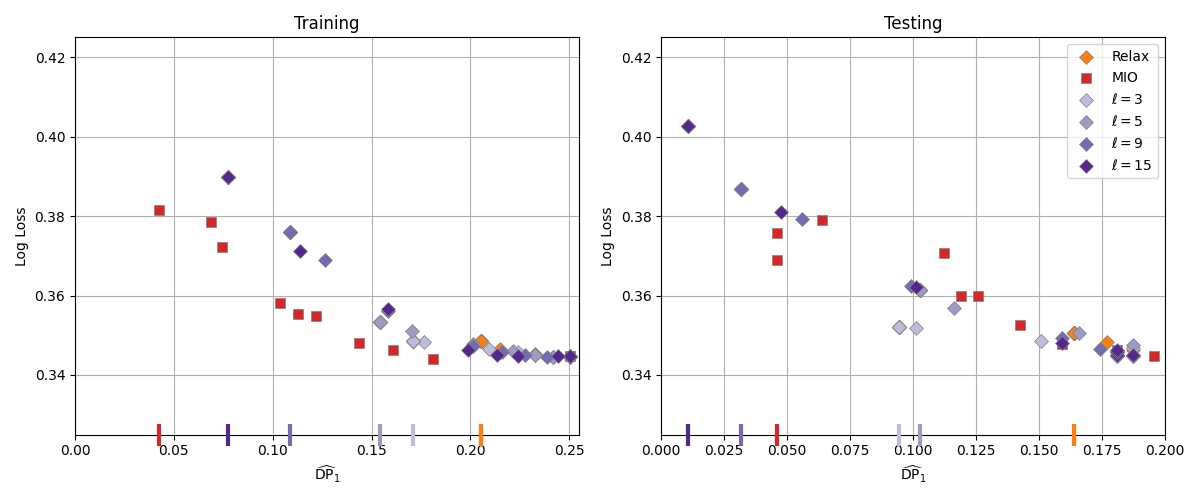}
        \caption{Accuracy vs fairness for logistic regression for of the \emph{CelebA} dataset trained using the proposed methodologies: mixed-integer optimization, the convex relaxation and the convex approximation using $\mathrm{\text{DP}}_\ell$ for $\ell>1$.}
    \label{fig:celebaA.stat.performance}
\end{figure}

\begin{figure}[h!]
    \centering
    \includegraphics[width=\linewidth]{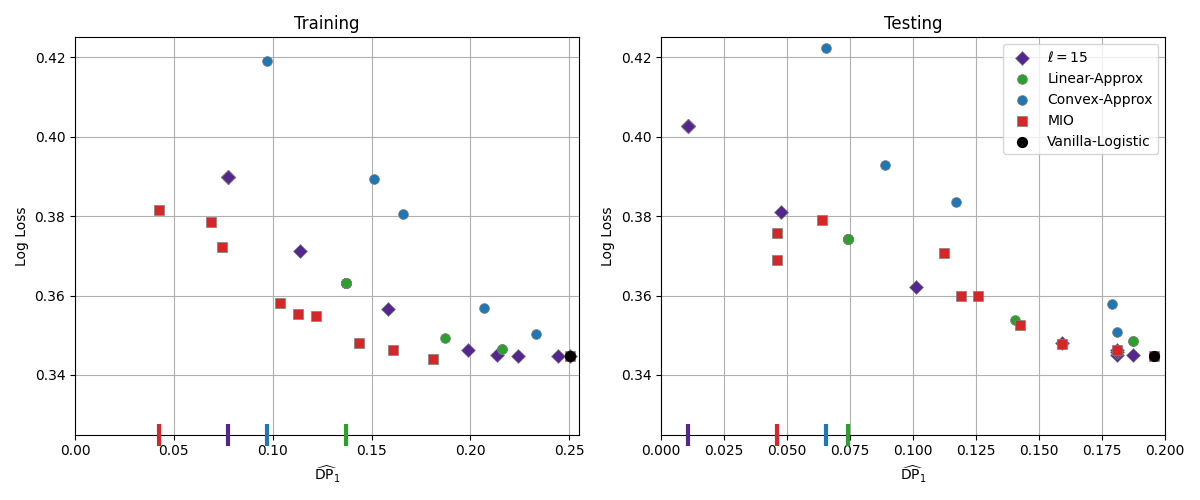} 
    \caption{Accuracy vs fairness for logistic regression for the \emph{CelebA} dataset. The black dot represents the vanilla logistic regression trained without any fairness considerations. Colored bars on the x-axis show the lowest $\widehat{\mathrm{DP}}_1$ attained by each method.}\label{fig:logistic.regression.stat.performance}
\end{figure}

Figure~\ref{fig:logistic.regression.stat.performance} shows the accuracy-fairness trade-off curves achieved over the training set for various methods. While \texttt{MIO} outperforms all convex methods in training, the convex approximation with $\widehat{\mathrm{DP}}_{15}$ performs similarly well out-of-sample, and is able to achieve high levels of fairness that other convex methods cannot. Notably, convex methods cannot achieve increasingly fair solutions beyond a certain threshold. We mark this threshold on the x-axis of Figure~\ref{fig:logistic.regression.stat.performance}.

\section{Conclusion}
We present a mixed-integer optimization framework for fair regression problems and propose a strong convex relaxation obtained by convexifying the key mixed-integer substructure used to define fairness. The strong relaxation is exact for single-observation and single-factor fair regression. We develop three new methods utilizing this convexification to train fair regression models, each balancing optimality and the computational effort. The first is a heuristic method that uses regression coefficients obtained by solving the strong convex relaxation as a standalone. The second is a coordinate descent method that improves upon a provided initial model. The third method employs the developed strong convex relaxation with branch-and-bound methods to solve the exact mixed-integer optimization problem. We demonstrate through numerical experiments on fair least squares and fair logistic regression problems that even the simplest method (using the strong convex relaxation as a standalone) builds regression models that outperform the state-of-the-art approaches in trading off accuracy and fairness in out-of-sample tests and does so at significantly lower computational effort.
\bibliography{bib}
\bibliographystyle{plainnat}
\appendix
\section{Appendix}
\subsection{Coordinate descent implementation details}

We describe an efficient implementation of a single iteration of the coordinate descent algorithm \ref{alg:cd} to compute $w_k^{t+1} = \arg\min_{w_k \in \tilde{B}}\mathcal{L}(w_k) + \lambda R(w_k)$, where $\mathcal{L}(w_k)$ is the loss function and $R(w_k)$ is the fairness regularize. The fairness regularizer is computed by solving the optimization problem \eqref{eq:R(w)}. At each iteration, we enumerate the objective of the $\mathcal{O}(m\times \ell)$ candidates in $\tilde{B}$, selecting the best solution. To reduce computation time, we exploit the structure of the fairness term, which can be dynamically updated if $\tilde{B}$ is sorted.

We now introduce some necessary notation. Let $\tilde{b}_{[s]}$ be the $s$-th smallest element of $\tilde{B}$, and $i(s),j(s)$ the corresponding indices, that is $\tilde{b}_{i(s)j(s)} = \tilde{b}_{[s]}$. Define $\bm{z}_s$ as the only feasible solution to \eqref{eq:R(w)}, and let $c_s = R(w_k)$ when $w_k \in \left(\tilde{b}_{[s-1]}, \tilde{b}_{[s]}\right)$. Introducing vector $\bm{d}(\bm{z}) \in \R^\ell$, where $d_j(\bm{z}) = \frac{1}{m_1} \sum_{i=1,a_i=1}^m z_{ij} - \frac{1}{m} \sum_{i=1}^mz_{ij}$, we can write $c_s = \|\bm{d}(\bm{z}_s)\|_\infty$.

Given $\bm{z}_{s-1}$, the solution $\bm{z}_{s}$ can be quickly obtained since only a single constraint in \eqref{eq:R(w)} changes signs when $w_k$ is restricted to be in $\left(\tilde{b}_{[s-1]}, \tilde{b}_{[s]}\right)$ instead of $\left(\tilde{b}_{[s-2]}, \tilde{b}_{[s-1]}\right)$: $$\bm{z}_s = \begin{cases} \bm{z}_{s-1} + \mathrm{sign}\left(x_{i(s)k}\right) \left(\frac{1}{m_1} - \frac{1}{m}\right)\bm{e}_{j(s)} & \text{if } a_{j(s)} = 1, \\
\bm{z}_{s-1} + \mathrm{sign}\left(x_{i(s)k}\right)\frac{1}{m}\bm{e}_{j(s)} & \text{else.}\end{cases}$$ 
This recursive relationship allows us to quickly compute fairness evaluated at every point in $\tilde{B}$ instead of solving \eqref{eq:R(w)} for every point. Assuming $x_{ik} \neq 0, i \in [m]$, we have $R\left(\tilde{b}_{[s]}\right) = \min\left(c_{s-1}, c_s\right) = \min\left(\|\bm{z}_{s-1}\|_\infty, \|\bm{z}_{s}\|_\infty\right)$, which can be rapidly computed if the elements of $\tilde{B}$ are sorted.

\subsection{Data Generation}\label{appendix.data}

Here, we describe the data generation process used for the comparisons presented in Figure~\ref{fig:approx} in \S \ref{sec:comparisons}. Following the procedure used in \citet{zafar2017fairnessB}, we generate $n=200$ synthetic observations. Each observation is assigned a binary label at random. For each observation, we generate a two-dimensional feature vector by sampling from Gaussian distributions conditioned on the assigned label: $p(\bm{x}|y=1) = N\left(\begin{bmatrix}
    2 \\ 2
\end{bmatrix}, \begin{bmatrix}
    5 & 1 \\
    1 & 5
\end{bmatrix}\right)$ and $p(\bm{x}|y=0) = N\left(\begin{bmatrix}
    -2 \\ -2
\end{bmatrix}, \begin{bmatrix}
    10 & 1 \\
    1 & 10
\end{bmatrix}\right)$. Finally, protected status is randomly assigned to each observation $i$ with probability $p(z=1) = p(\bm{A}\bm{x}_i|y=1)/(p(\bm{A}\bm{x}_i|y=1)+p(\bm{A}\bm{x}_i|y=0))$, where $\bm{A} = \begin{bmatrix}
    \cos(\pi/4) & -\sin(\pi/4)\\
    \sin(\pi/4) & \cos(\pi/4)
\end{bmatrix}$.
\end{document}